\def\set@curr@file#1{\def\@curr@file{#1}} 
\newcommand{\inbrace}[1]{\left \{ #1 \right \}}
\newcommand{\inparen}[1]{\left ( #1 \right )}
\newcommand{\insquare}[1]{\left [ #1 \right ]}
\newcommand{\abs}[1]{\left\lvert #1 \right\rvert}
\newlength{\dhatheight}
\newcommand{\SET}[1]{\inbrace{#1}}
\DeclareMathOperator*{\sign}{sign}
\DeclareMathOperator*{\argmin}{argmin}
\DeclareMathOperator*{\argmax}{argmax}
\DeclareMathOperator*{\Ex}{\mathbb{E}}
\DeclareMathOperator*{\Prob}{Pr}
\newcommand{\eps}{\varepsilon}
\newcommand{\bbN}{{\mathbb N}}
\newcommand{\bbR}{{\mathbb R}}
\newcommand{\bbA}{{\mathbb A}}
\newcommand{\calD}{\mathcal{D}}
\newcommand{\calH}{\mathcal{H}}
\newcommand{\calP}{\mathcal{P}}
\newcommand{\calT}{\mathcal{T}}
\newcommand{\calU}{\mathcal{U}}
\newcommand{\calX}{\mathcal{X}}
\newcommand{\calY}{\mathcal{Y}}
\newcommand{\ERM}{\texttt{ERM}\xspace}
\newcommand{\ind}{\mathbbm{1}}
\newcommand{\err}{{\rm err}}
\newcommand{\vc}{{\rm vc}}
\newcommand{\OPT}{\mathsf{OPT}}
\newcommand{\CH}{\vc(\calH\circ\calT)}
\newcommand{\removed}[1]{}
\newcommand{\enote}[1]{{\color{red} E: {#1}}}
\newcommand{\onote}[1]{{\color{blue} O: {#1}}}
\theoremstyle{plain}
\newtheorem{theorem}{Theorem}[section]
\newtheorem{proposition}[theorem]{Proposition}
\newtheorem{lemma}[theorem]{Lemma}
\theoremstyle{definition}
\theoremstyle{remark}
\newtheorem{remark}[theorem]{Remark}
\newtheorem{exa}{Example}
\begin{document}

\title{Transformation-Invariant Learning and\\Theoretical Guarantees for OOD Generalization}
\author{
}

\author{\name{Omar Montasser} \email{omar.montasser@yale.edu}\\
 \addr Yale University\\
 \name{Han Shao} \email{han@ttic.edu}\\
 \addr Harvard University\\
 \name{Emmanuel Abbe} \email{emmanuel.abbe@epfl.ch}\\
 \addr EPFL and Apple\\
 }

\maketitle

\begin{abstract}
Learning with identical train and test distributions has been extensively investigated both practically and theoretically. Much remains to be understood, however, in statistical learning under distribution shifts. This paper focuses on a distribution shift setting where train and test distributions can be related by classes of (data) transformation maps.\removed{Such maps may have different nature, e.g., affine transformations in vision tasks or permutations in combinatorial/reasoning tasks.} We initiate a theoretical study for this framework, investigating learning scenarios where the target class of transformations is either known or unknown. We establish learning rules and algorithmic reductions to Empirical Risk Minimization (ERM), accompanied with  learning guarantees. We obtain upper bounds on the sample complexity in terms of the VC dimension of the class composing predictors with transformations, which we show in many cases is not much larger than the VC dimension of the class of predictors. We highlight that the learning rules we derive offer a game-theoretic viewpoint on distribution shift: a learner searching for predictors and an adversary searching for transformation maps to respectively minimize and maximize the worst-case loss.
\end{abstract}

\section{Introduction}
\label{sec:intro}
It is desirable to train machine learning predictors that are robust to distribution shifts. In particular when data distributions vary based on the environment, or when part of the domain is not sampled at training such as in reasoning tasks. How can we train predictors that generalize beyond the distribution from which the training examples are drawn from? A common challenge that arises when tackling out-of-distribution generalization is capturing the structure of distribution shifts. A common approach is to mathematically describe such shifts through distance or divergence measures, as in prior work on domain adaptation theory \citep[e.g., ][]{DBLP:journals/corr/abs-2004-11829} and distributionally robust optimization \citep[e.g., ][]{duchi2021learning}.

In this paper, we put forward a new formulation for out-of-distribution generalization. Our formulation offers a conceptually different perspective from prior work, where we describe the structure of distribution shifts through data transformations. We consider an unknown distribution $\calD$ over $\calX\times \calY$ which can be thought of as the ``training'' or ``source'' distribution from which training examples are drawn, and a collection of data transformation maps $\calT=\SET{T:\calX \to \calX}$ which can be thought of as encoding ``target'' distribution shifts, hence denoted as $\SET{T(\calD)}_{T\in\calT}$. We consider a covariate shift setting where labels are {\em not} altered or changed under transformations $T\in\calT$, and we write $T(\calD)$ for notational convenience. Our goal, which will be formalized further shortly, is to learn a single predictor $\hat{h}$ that performs well {\em uniformly} across {\em all} distributions $\SET{T(\calD)}_{T\in\calT}$.

We view this formulation as enabling a different way to describe distribution shifts through transformations $\calT=\SET{T:\calX \to \calX}$. The collection of transformations $\calT$ can be viewed as either: (a) given to the learning algorithm as part of the problem, or (b) chosen by the learning algorithm.\removed{\enote{[Clarify/develop previous 2 points?]}\onote{updated.}}View (a) represents scenarios where the target distribution shifts are known and specified by some downstream application (e.g., learning a classifier that is invariant to image rotations and translations). View (b) represents scenarios where there is uncertainty or there are no pre-specified target distribution shifts and we would like to perform maximally well relative to an expressive collection of transformations. We highlight next several problems of interest that can be captured by this formulation. We refer the reader to \cref{sec:discussion} for a more detailed discussion in the context of prior work.

\begin{itemize}[topsep = 2pt, itemsep=2pt, parsep=1pt, leftmargin = *]
    \item Covariate Shift \& Domain Adaptation. By Brenier's theorem \citep{brenier1991polar}, when  $\calX=\bbR^d$, then under mild assumptions, for any source distribution $P$ over $\calX$ and target distribution $Q$ over $\calX$, there exists a transformation $T:\calX\to \calX$ such that $Q=T(P)$. Thus, by choosing an expressive collection of transformations $\calT$, we can address arbitrary covariate shifts.
    \item Transformation-Invariant Learning. In many applications, it is desirable to train predictors that are invariant to transformations or data preprocessing procedures representing different ``environments'' (e.g., an image classifier deployed in different hospitals, or a self-driving car operating in different cities). 
    \item Representative Sampling. In many applications, there may be challenges in collecting ``representative'' training data. For instace, in learning Logic or Arithmetic tasks \citep{DBLP:conf/icml/AbbeBLR23}, the combinatorial nature of the data makes it not possible to cover well all parts of the domain. E.g., there is always a limit to the length of the problem considered at training, or features may not be homogeneously represented at training (bias towards certain digits etc.).   Choosing a suitable collection of transformations $\calT$ under which the target function is invariant can help to model in such cases.  
    \item Adversarial Attacks. Test-time adversarial attacks such as adversarial patches in vision tasks \citep{brown2017adversarial, karmon2018lavan}, attack prompts in large language models \citep{DBLP:journals/corr/abs-2307-15043}, and ``universal attacks'' \citep{DBLP:conf/cvpr/Moosavi-Dezfooli17} can all be viewed as instantiations constructing specific transformations $\calT$.
\end{itemize}

\paragraph{Our Contributions.}
Let $\calX$ be the instance space and $\calY=\SET{\pm1}$ the label space. Let $\calH\subseteq \calY^\calX$ be a hypothesis class, and denote by $\vc(\calH)$ its VC dimension. Consider a collection of transformations $\calT=\SET{T: \calX \to \calX}$, and some unknown distribution $\calD$ over $\calX\times \calY$. Let $\err(h, T(\calD))= \Prob_{(x,y)\sim \calD} \insquare{h(T(x))\neq y}$ be the error of predictor $h$ on transformed distribution $T(\calD)$. 

Given a training sample $S=\SET{(x_1,y_1),\dots,(x_m,y_m)}\sim \calD^m$, we are interested in learning a predictor $\hat{h}$ with {\em uniformly small risk} across all transformations $T\in \calT$. Formally, 
\begin{equation}
\label{eqn:dro}
\sup_{T\in \calT}\err(\hat{h}, T(\calD)) \leq \OPT_{\infty} + \eps,\text{ where }\OPT_{\infty} := \inf_{h^\star\in\calH} \sup_{T\in \calT} \SET{\err(h^\star, T(\calD))}.
\end{equation}
{\vskip -2mm}

This objective is similar to that considered in prior work on distributionally robust optimization \citep{duchi2021learning} and multi-distribution learning \citep{DBLP:conf/nips/HaghtalabJ022}. The main difference is that in this work we are describing the collection of ``target'' distributions $\SET{T(\calD)}_{T\in\calT}$ as transformations of the ``source'' distribution $\calD$. This allows us to obtain new upper bounds on the sample complexity of learning under distribution shifts based on the VC dimension of the composition of $\calH$ with $\calT$, denoted $\vc(\calH\circ \calT)$ (see \cref{eqn:HT}). We describe next our results (informally):
\begin{enumerate}[topsep = 2pt, itemsep=2pt, parsep=1pt, leftmargin = *]
    \item In \cref{sec:learningrule} (\cref{theorem:drosc}), we show that, given the knowledge of any hypothesis class $\calH$ and any collection of transformations $\calT$, by minimizing the empirical worst case risk, we can solve Objective~\ref{eqn:dro} with sample complexity bounded by $\vc(\calH\circ \calT)$. Furthermore, in \cref{theorem:lowerbound}, we show that the sample complexity of any {\em proper} learning rule is bounded from below by $\Omega(\vc(\calH\circ \calT))$. 
    
    \item In \cref{sec:reduction} (\cref{theorem:agnostic}), we consider a more challenging scenario in which $\calH$ is unknown. Instead, we are only given an \ERM oracle for $\calH$. We then present a generic algorithmic reduction (\cref{alg:agnostic}) solving Objective~\ref{eqn:dro} using only an \ERM oracle for $\calH$, when the collection $\calT$ is finite. This is established by solving a zero-sum game where the $\calH$-player runs \ERM and the $\calT$-player runs Multiplicative Weights \citep{DBLP:journals/jcss/FreundS97}.
    
    \item In \cref{sec:unknown} (\cref{theorem:unknown}), we consider situations where we {\em do not know} which transformations are relevant (or important) for the learning task at hand, and so we pick an expressive collection $\calT$ and aim to perform well on as many transformations as possible. We then present a different generic learning rule (\cref{eqn:unknown}) that learns a predictor $\hat{h}$ achieving low error (say $\eps$) on as many target distributions in $\SET{T(\calD)}_{T\in \calT}$ as possible.
    \item In \cref{sec:regret} (Theorems~\ref{theorem:sc-regret} \& \ref{theorem:mro-agnostic}), we extend our learning guarantees to a slightly different objective, Objective~\ref{eqn:mro}, that can be favorable to Objective~\ref{eqn:dro} when there is heterogeneity in the noise across different transformations. This is inspired by \citet*{pmlr-v178-agarwal22b} who introduced this objective.
\end{enumerate}

\paragraph{Practical Implications.} Our learning rules offer: (1) a game-theoretic perspective on distribution shift, where the $\calH$-player searches for a predictor $h\in \calH$ to minimize the worst-case error while the $\calT$-player searches for a transformation $T\in\calT$ to maximize the worst-case error, along with (2) the interpretation that solving such games yields a predictor $h\in \calH$ that generalizes to the collection of transformations $\calT$. This brings into discussion the potential benefit of transformations for tackling distribution shifts. In particular, this raises an interesting direction to explore of implementing such min-max games between predictors $\calH$ and transformations $\calT$ both parameterized by neural network architectures, which is beyond the scope of this work.

\section{Minimizing Worst-Case Risk} 
\label{sec:learningrule}
If we have access to, or know, the hypothesis class $\calH$ and the collection of transformations $\calT$, then the most direct and intuitive way of solving Objective~\ref{eqn:dro} is minimizing the empirical worst-case risk. Specifically,
\begin{equation}
\label{eqn:erm-inv}
    \hat{h} \in \argmin_{h\in \calH}\max_{T \in \calT}\SET{\frac{1}{m} \sum_{i=1}^{m} \ind\insquare{h(T(x_i)) \neq y_i}}.
\end{equation}

We highlight that this learning rule offers a game-theoretic perspective on distribution shift, where the $\calH$-player searches for a predictor $h\in \calH$ to minimize the worst-case error while the $\calT$-player searches for a transformation $T\in\calT$ to maximize the worst-case error. For instance, both predictors $\calH$ and transformations $\calT$ can be parameterized by neural network architectures, which is an interesting direction to explore further. We note that similar min-max optimization problems have appeared before in the literature on adversarial examples and generative adversarial networks \citep[e.g., ][]{DBLP:conf/iclr/MadryMSTV18, DBLP:journals/cacm/GoodfellowPMXWO20}.

We present next a PAC-style learning guarantee for this learning rule which offers the interpretation that solving the min-max optimization problem in \cref{eqn:dro} yields a predictor $\hat{h}\in \calH$ that generalizes to the collection of transformations $\calT$. We show that the sample complexity of this learning rule is bounded by the VC dimension of the composition of $\calH$ with $\calT$, where 
\begin{equation}
    \label{eqn:HT}
    \begin{split}
    \calH\circ \calT := \SET{h\circ T: h\in\calH, T\in \calT}, \text{ where } &(h\circ T)(x) = h(T(x))~~\forall x \in \calX.
    \end{split}
\end{equation}

\begin{theorem}
\label{theorem:drosc}
    For any class $\calH$, any collection of transformations $\calT$, any $\eps,\delta\in (0,\nicefrac12)$, any distribution $\calD$, with probability at least $1-\delta$ over $S\sim \calD^{m(\eps,\delta)}$ where $m(\eps,\delta) = O\inparen{\frac{\CH+ \log(1/\delta)}{\eps^2}}$,
    \begin{equation*}
        \sup_{T\in \calT}\err(\hat{h}, T(D)) \leq \OPT_\infty + \eps.
    \end{equation*}
\end{theorem}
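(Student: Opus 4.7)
The natural approach is to reduce this to a standard uniform convergence argument over the composed class $\calH \circ \calT$. The key observation is that for every $h \in \calH$ and $T \in \calT$,
\[
\err(h, T(\calD)) \;=\; \Pr_{(x,y)\sim \calD}\insquare{(h\circ T)(x) \neq y} \;=\; \err(h\circ T, \calD),
\]
and similarly the empirical worst-case risk in \cref{eqn:erm-inv} is exactly $\sup_{T\in\calT} \widehat{\err}(h\circ T, S)$, where $\widehat{\err}(\cdot, S)$ is the standard empirical $0/1$ risk on the sample $S$. So the whole problem lives inside the class $\calH \circ \calT$, and what we need is one-sided uniform convergence on that class.

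\textbf{Step 1: Uniform convergence on $\calH\circ\calT$.} Apply the standard VC-based uniform convergence bound (e.g., the Vapnik--Chervonenkis inequality) to the class of $0/1$ loss indicators $\SET{(x,y)\mapsto \ind[f(x)\neq y] : f \in \calH\circ\calT}$. Since this loss class has VC dimension at most $\vc(\calH\circ\calT)=\CH$, for $m = m(\eps,\delta) = O\inparen{(\CH+\log(1/\delta))/\eps^2}$ we obtain, with probability at least $1-\delta$ over $S\sim\calD^m$,
\[
\sup_{f \in \calH\circ\calT} \abs{\err(f,\calD) - \widehat{\err}(f,S)} \;\leq\; \eps/2.
\]

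\textbf{Step 2: ERM-style decomposition with a supremum.} Condition on the $1-\delta$ event above. Let $h^\star \in \argmin_{h\in\calH}\sup_{T\in\calT} \err(h,T(\calD))$ (assume the infimum is attained; otherwise take $h^\star$ achieving $\OPT_\infty + \eps'$ and let $\eps' \to 0$). Pushing the supremum inside the absolute value gives, for any $h\in\calH$,
\[
\abs[\Big]{\sup_{T\in\calT}\err(h, T(\calD)) - \sup_{T\in\calT}\widehat{\err}(h\circ T, S)} \;\leq\; \sup_{T\in\calT} \abs{\err(h\circ T,\calD) - \widehat{\err}(h\circ T,S)} \;\leq\; \eps/2,
\]
since $h\circ T \in \calH\circ\calT$. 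Applying this once with $h=\hat{h}$ and once with $h=h^\star$, and using the defining optimality of $\hat{h}$ for the empirical worst-case risk, yields the chain
\[
\sup_{T\in\calT}\err(\hat{h}, T(\calD)) \;\leq\; \sup_{T\in\calT}\widehat{\err}(\hat{h}\circ T,S) + \eps/2 \;\leq\; \sup_{T\in\calT}\widehat{\err}(h^\star\circ T,S) + \eps/2 \;\leq\; \OPT_\infty + \eps,
\]
which is exactly the claim.

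\textbf{Main obstacle.} There is no serious obstacle: once one notices that $\err(h,T(\calD)) = \err(h\circ T,\calD)$, the worst-case risk over $\calT$ becomes an ordinary risk over the composed class on the fixed source distribution $\calD$, and the standard ERM proof goes through essentially unchanged. The only mildly delicate point is commuting the supremum over $\calT$ with the deviation bound, which is handled by the one-line inequality $\abs{\sup_T a_T - \sup_T b_T} \leq \sup_T \abs{a_T - b_T}$ used above. The power of the theorem therefore comes not from the proof itself but from the observation that $\vc(\calH\circ\calT)$ is the ``right'' complexity measure, and from the fact (developed elsewhere in the paper) that this quantity is often only mildly larger than $\vc(\calH)$.
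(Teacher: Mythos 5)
Your proof is correct and follows essentially the same route as the paper's: invoke VC uniform convergence over the composed class $\calH\circ\calT$, then run the standard ERM comparison against a near-optimal $h^\star\in\calH$ using the empirical optimality of $\hat h$. The only cosmetic difference is that you make the inequality $\abs{\sup_T a_T - \sup_T b_T}\leq \sup_T\abs{a_T-b_T}$ explicit, whereas the paper applies the one-sided deviations directly in the chain; both yield the same bound up to the constant in front of $\eps$.
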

\begin{proof}
The proof follows from invoking uniform convergence guarantees with respect to the composition $\calH\circ \calT$ (see Proposition~\ref{proposition:uniform-conv} in \cref{appendix:1}) and the definition of $\hat{h}$ described in \cref{eqn:erm-inv}. 
Let $h^\star\in \calH$ be an a-priori fixed predictor (independent of sample $S$) attaining $\OPT_\infty = \inf_{h\in\calH}\sup_{T\in\calT} \err(h, T(\calD))$ (or $\eps$-close to it). By setting $m(\eps,\delta)= O\inparen{\frac{\CH+ \log(1/\delta)}{\eps^2}}$ and invoking Proposition~\ref{proposition:uniform-conv}, we have the guarantee that with probability at least $1-\delta$ over $S\sim \calD^{m(\eps,\delta)}$,
\[ \inparen{\forall h \in \calH}\inparen{\forall T\in \calT}: \abs{ \err(h, T(S)) - \err(h, T(\calD))} \leq \eps.\]
Since $\hat{h}, h^\star\in\calH$, the inequality above implies that
\begin{align*}
    \forall T\in\calT:&~~\err(\hat{h}, T(\calD)) \leq \err(\hat{h}, T(S)) + \eps.\\
    \forall T\in\calT:&~~\err(h^\star, T(S)) \leq \err(h^\star, T(\calD)) + \eps.
\end{align*}
Furthermore, by definition, since $\hat{h}$ minimizes the empirical objective, it holds that
\[\sup_{T\in\calT} \err(\hat{h}, T(S)) \leq \sup_{T\in \calT} \err(h^\star, T(S)).\]
By combining the above, we get 
\begin{align*}
    \sup_{T\in\calT}\err(\hat{h}, T(\calD)) \leq \sup_{T\in\calT} \err(\hat{h}, T(S)) + \eps &\leq \sup_{T\in \calT} \err(h^\star, T(S)) + \eps \leq \OPT_\infty + 2\eps.\qedhere
\end{align*}
\end{proof}
We show next that $\vc(\calH\circ \calT)$ can be much higher than $\vc(\calH)$ and the dependency on $\vc(\calH\circ \calT)$ is tight for all proper learning rules, which includes the learning rule described in \cref{eqn:erm-inv} and more generally any learning rule that is restricted to outputting a classifier in $\calH$.

\begin{theorem}
\label{theorem:lowerbound}
$\forall k \in \bbN$, $\exists \calX, \calH, \calT$ such that $\vc(\calH)=1$ but $\vc(\calH\circ \calT)\geq k$, and the sample complexity of {\em any proper} learning rule $\bbA: (\calX\times \calY)^* \to \calH$ solving Objective~\ref{eqn:dro} is {\em at least} $\Omega(\vc(\calH\circ \calT))$.
\end{theorem}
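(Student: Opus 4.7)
My plan for proving the theorem is to handle the two claims separately. I would first exhibit an explicit construction of $(\calX, \calH, \calT)$ witnessing the VC separation $\vc(\calH) = 1 < k \leq \vc(\calH \circ \calT)$, and then argue the sample complexity lower bound via a Yao-style no-free-lunch reduction to identifying a hidden index $j \in [k]$ from uniform samples. Concretely, take $\calX = \{x_1,\ldots,x_k\} \cup \{v_\sigma : \sigma \in \{\pm 1\}^k\} \cup \{z\}$ and let $\calH = \{h_{v_\sigma} : \sigma \in \{\pm 1\}^k\}$ be the class of singleton indicators $h_v(x) = 2\cdot\ind[x=v]-1$ on the ``target'' points. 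Since $\calH$ consists of singletons, $\vc(\calH) = 1$. For each $\sigma \in \{\pm 1\}^k$, define $T_\sigma \in \calT$ by $T_\sigma(x_i) = v_\sigma$ when $\sigma_i=+1$ and $T_\sigma(x_i) = z$ otherwise (extending by the identity elsewhere). Then $h_{v_\sigma} \circ T_\sigma$ realizes the labeling $\sigma$ on $\{x_1,\ldots,x_k\}$, so $\calH \circ \calT$ shatters this set and $\vc(\calH \circ \calT) \geq k$.

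For the lower bound, I would consider the family $\{\calD_j\}_{j \in [k]}$ of hard distributions, where $\calD_j$ is uniform on $\{(x_1,-1),\ldots,(x_j,+1),\ldots,(x_k,-1)\}$, corresponding to the labeling $\sigma^{(j)}$ with $+1$ only at coordinate $j$. A direct computation gives $\sup_\tau \err(h_{v_\sigma}, T_\tau(\calD_j)) = \max(\mathrm{dis}(\sigma,\sigma^{(j)}),\, 1/k)$, where $\mathrm{dis}$ denotes normalized Hamming distance, so $\OPT_\infty = 1/k$. For $\eps < 1/k$, this is attained only by hypotheses $h_{v_\sigma}$ whose $\sigma$ satisfies $\sigma_j = +1$ and has at most two coordinates equal to $+1$ in total. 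Since the only sample that carries information about $j$ is $(x_j, +1)$---drawn with probability $1/k$ at each step---Yao's minimax principle applied to $j$ drawn uniformly from $[k]$ shows that any proper learner needs $\Omega(k)$ samples to succeed with constant probability, giving the desired $\Omega(\vc(\calH \circ \calT))$ lower bound.

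The principal obstacle is removing ``safe-default'' hypotheses from $\calH$ that would achieve $\OPT_\infty$ without using any training data. In particular, the singleton $h_{v_{\sigma_-}}$ at the ``all-$-1$'' target point $v_{\sigma_-}$ satisfies $\sup_T \err(h_{v_{\sigma_-}}, T(\calD_j)) = 1/k = \OPT_\infty$ for every $j$, which would let a trivial learner always output it and succeed. I would address this either by excluding $h_{v_{\sigma_-}}$ from $\calH$ (the shattering property still yields $\vc(\calH \circ \calT) \geq k-1 = \Omega(k)$), or by adjoining to $\calT$ an adversarial transformation $T^\star$ sending every $x_i$ to $v_{\sigma_-}$, which raises the worst-case error of $h_{v_{\sigma_-}}$ to $1 - 1/k$ while leaving the other $\sup_\tau\err$ computations intact. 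The remaining verifications---checking the Hamming-distance case analysis and the coupon-collector concentration bound under Yao's minimax principle---are routine.
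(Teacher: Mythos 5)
Your VC-separation part is fine, and your overall no-free-lunch strategy (hide an index, argue a proper learner must guess it) is the same spirit as the paper's argument. But the sample-complexity half has a genuine quantitative gap that the ``safe-default'' patches do not repair. In your instance every hypothesis $h_{v_\sigma}$ predicts $-1$ on all of $T_\tau(\calD_j)$ whenever $\tau\neq\sigma$, so the single positive point $(x_j,+1)$, which carries mass $1/k$, is always missed; hence $\OPT_{\infty}=1/k$ and the excess error of a ``wrong'' proper output over the optimum is only $\Theta(1/k)$ (a hypothesis with a single arbitrary $+1$ coordinate has worst-case error at most $2/k$). Consequently your argument only forces $\Omega(k)$ samples in the regime $\eps<1/k$: for any constant $\eps$ and $k>2/\eps$, a learner that ignores the data and outputs, say, $h_{v_{\sigma^{(1)}}}$ already satisfies Objective~\ref{eqn:dro}, so your construction gives no lower bound at constant accuracy. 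A lower bound that only bites at accuracy $\eps\sim 1/\vc(\calH\circ\calT)$ is essentially an $\Omega(1/\eps)$ statement and does not establish what the theorem is meant to show, namely that the $\vc(\calH\circ\calT)$ dependence of Theorem~\ref{theorem:drosc} is unavoidable for proper rules at fixed $(\eps,\delta)$. Removing $h_{v_{\sigma_-}}$ or adding $T^\star$ does not change this, because the near-default hypotheses remain within $1/k$ of optimal; the defect is intrinsic to using singletons with a single positive point, not to the one hypothesis you flagged.

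The paper's construction is engineered precisely to avoid this: it takes $3k$ base points, transformations $T_P$ (one per $k$-subset $P\subseteq[3k]$) mapping them to fresh points, hypotheses $h_P$ that are positive everywhere except on $\SET{T_P(x_i)}_{i\in P}$, and distributions $\calD_P$ uniform on the $2k$ \emph{positive} points $\SET{(x_i,+1)}_{i\notin P}$. This makes the problem realizable ($\OPT_{\infty}=0$), and the adversarial transformation is matched to the learner's own output: if the learner outputs $h_{P'}$, the transformation $T_{P'}$ exposes every coordinate of $P'$ that falls outside the hidden $P$, each costing mass $1/(2k)$. Since a sample of size $k$ leaves at least $k$ uniformly random support points unseen, the learner's guess of the $k$-set misses a constant fraction, giving constant worst-case error (e.g., error $>1/8$ with constant probability) against $\OPT_{\infty}=0$, and hence a true $\Omega(\vc(\calH\circ\calT))$ bound at constant $(\eps,\delta)$. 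To salvage your route you would need a similar redesign in which a wrong guess is punished at constant scale (constant mass on which the learner's own hypothesis, under a matched transformation, is forced to err), rather than the $1/k$-scale signal your singleton construction provides.
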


A proof is deferred to \cref{appendix:2}. We remark that the sample complexity cannot be improved by proper learning rules and this leaves open the possibility of improving the sample complexity with {\em improper} learning rules. There are many examples in the literature where there are sample complexity gaps between proper and improper learning \citep[e.g., ][]{DBLP:journals/ml/Angluin87, DBLP:conf/colt/DanielyS14, DBLP:conf/colt/FosterKLMS18, pmlr-v99-montasser19a, DBLP:conf/focs/AlonHHM21}. In particular, it appears that we encounter in this work a phenomena similar to what occurs in adversarially robust learning \citep{pmlr-v99-montasser19a}. Nonetheless, even at the expense of (potentially) higher sample complexity, we believe that there is value in the simplicity of the learning rule described in \cref{eqn:erm-inv}, and exploring ways of implementing it is an interesting direction beyond the scope of this work.

\subsection{Examples and Instantiations of Guarantees}
To demonstrate the utility of our generic result in \cref{theorem:drosc}, we discuss next a few general cases where we can bound the VC dimension of $\calH$ composed with $\calT$, $\vc(\calH\circ\calT)$. This allows us to obtain new learning guarantees with respect to classes of distribution shifts that are {\em not} covered by prior work, to the best of our knowledge.

\paragraph{Linear Transformations.} Consider $\calT$ being a (potentially infinite) collection of \emph{linear} transformations. For example, in vision  tasks, this includes many transformations that have been widely studied in practice such as rotations, translations, maskings, adding random noise (or any fixed a-priori arbitrary noise), and their compositions \citep{DBLP:conf/icml/EngstromTTSM19, DBLP:conf/iclr/HendrycksD19}. 

Interestingly, for a broad range of hypothesis classes $\calH$, we can show that $\vc(\calH\circ \calT)\leq \vc(\calH)$ without incurring any dependence on the complexity of $\calT$. Specifically, the result applies to any function class $\calH$ that consists of a linear mapping followed by an arbitrary mapping. This includes feed-forward neural networks with any activation function, and modern neural network architectures (e.g., CNNs, ResNets, Transformers). We find the implication of this bound to be interesting, because it suggests (along with \cref{theorem:drosc}) that the learning rule in \cref{eqn:erm-inv} can generalize to linear transformations with sample complexity that is not greater than the sample complexity of standard PAC learning. We formally present the lemma below, and defer the proof to \cref{sec:vcbouds}. 

\begin{lemma}
\label{lemma:vcbound}
    For any collection of \emph{linear} transformations $\calT$ and any hypothesis class of the form $\calH = \SET{ f\circ W: \bbR^d \to \calY ~~|~~ W\in \bbR^{k\times d} \wedge f:\bbR^{k} \to \calY}$,
    it holds that $\vc(\calH\circ \calT) \leq \vc(\calH)$. 
\end{lemma}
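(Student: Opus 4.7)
The plan is to observe that composing any $h \in \calH$ with any linear transformation $T \in \calT$ produces a predictor that is again of the form $f' \circ W'$ with $W' \in \bbR^{k\times d}$ and $f' : \bbR^k \to \calY$, so that $\calH \circ \calT \subseteq \calH$. The bound on VC dimensions is then immediate from the standard monotonicity of VC dimension under set inclusion.

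In more detail, a linear transformation $T : \bbR^d \to \bbR^d$ can be written as $T(x) = Ax$ for some $A \in \bbR^{d\times d}$. For any $h = f \circ W \in \calH$, we would compute
\begin{equation*}
(h \circ T)(x) = f\bigl(W(T(x))\bigr) = f\bigl(W A x\bigr) = f \circ W'(x),
\end{equation*}
where $W' := W A \in \bbR^{k \times d}$. Since $f : \bbR^k \to \calY$ is an arbitrary function and $W' \in \bbR^{k \times d}$, the composed predictor $h \circ T$ matches the defining template of $\calH$, so $h \circ T \in \calH$. Taking the union over all $h \in \calH$ and $T \in \calT$ yields $\calH \circ \calT \subseteq \calH$.

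The final step is then: any set of points shattered by $\calH \circ \calT$ is also shattered by $\calH$, so $\vc(\calH \circ \calT) \leq \vc(\calH)$.

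I do not expect a real obstacle here; the only subtlety is the interpretation of ``linear'': the argument uses that linear maps are closed under composition (matrix multiplication) and that the intermediate dimension $k$ is preserved because $W$ and $A$ multiply to give another $k \times d$ matrix. If one wanted to extend the statement to affine transformations $T(x) = Ax + b$, one would need $\calH$ to tolerate an additive bias at the pre-$f$ stage (or absorb the constant via a homogeneous coordinate), but the lemma as stated concerns linear $T$ so this subtlety does not arise.
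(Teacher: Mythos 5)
Your proof is correct and matches the paper's approach: both establish $\calH \circ \calT \subseteq \calH$ (a linear $T$ composed with the linear $W$ yields another $k\times d$ linear map) and then invoke monotonicity of VC dimension under set inclusion. The only cosmetic difference is that the paper phrases the step $W\circ T = \tilde W$ in the language of adjoints and the Riesz representation theorem, whereas you state it directly as the matrix product $W' = WA$, which is arguably the more transparent formulation.
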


\paragraph{Non-Linear Transformations.} Consider $\calT$ being a (potentially infinite) collection of {\em non-linear} transformations parameterized by a feed-forward neural network architecture, where each $T = W_L\circ\phi\circ\cdots \phi\circ W_2\circ \phi\circ W_1$ and $\phi(\cdot)=\max\SET{0, \cdot}$ is the ReLU activation function. Similarly, consider a hypothesis class $\calH$ that is parameterized by a (different) feed-forward neural network architecture, where each $h = \sign \circ \Tilde{W}_H\circ\phi\circ\cdots \phi\circ \Tilde{W}_2\circ \phi\circ \Tilde{W}_1$. Observe that the composition $\calH \circ \calT$ consists of (deeper) feed-forward neural networks, where $h\circ T = \sign \circ \Tilde{W}_H\circ\phi\circ\cdots \phi\circ \Tilde{W}_2\circ \phi\circ \Tilde{W}_1 \circ W_L\circ\phi\circ\cdots \phi\circ W_2\circ \phi\circ W_1$. Thus, we can bound $\vc(\calH\circ \calT)$ by appealing to classical results bounding the VC dimension of feed-forward neural networks. For example, according to \cite{DBLP:journals/jmlr/BartlettHLM19}, it holds that $\vc(\calH\circ \calT) \leq O\inparen{(H+L)P_{\calH\circ\calT}\log(P_{\calH\circ\calT})}$, where $H+L$ is the depth of the networks in $\calH\circ \calT$ and $P_{\calH\circ\calT}$ is the number of parameters of the networks in $\calH\circ \calT$ (which is $P_\calH+P_\calT$). In this context, \cref{theorem:drosc} and \cref{eqn:erm-inv} present a new learning guarantee against distribution shifts parameterized by non-linear transformations induced with feed-forward neural networks. 

\paragraph{Transformations on the Boolean hypercube.} The Boolean hypercube has also received attention recently as a case-study for distribution shifts in Logic or Arithmetic tasks\citep{DBLP:conf/icml/AbbeBLR23}. We show next that when the instance space $\calX=\SET{0,1}^d$, we can bound the VC dimension of $\calH\circ \calT$ from above by the sum of the VC dimension of $\calH$ and the VC dimensions of $\SET{\calT_i}_{i=1}^{d}$ where each $\calT_i = \SET{x \mapsto T(x)_i: T\in \calT}$ is a function class resulting from restricting transformations $T:\SET{0,1}^d\to \SET{0,1}^d \in \calT$ to output only the $i^{\text{th}}$ bit. The proof is deferred to \cref{sec:vcbouds}

\begin{lemma} When $\calX=\SET{0,1}^d$, for any hypothesis class $\calH$ and any collection of transformations $\calT$, $\vc(\calH\circ \calT)\leq O(\log d)(\vc(\calH) + \sum_{i=1}^{d} \vc(\calT_i))$, where each $\calT_i = \SET{x \mapsto T(x)_i: T\in \calT}$. 
\end{lemma}

In this context, \cref{theorem:drosc} and \cref{eqn:erm-inv} present a new learning guarantee against arbitrary distribution shifts parameterized by transformations on the Boolean hypercube, where the sample complexity (potentially) grows with the complexity of the transformations as measured by the VC dimension. We note however that this learning guarantee does not address the problem of length generalization, since we restrict to transformations that preserve domain length. 

\paragraph{Adversarial Attacks.}
In adversarially robust learning, a test-time attacker is typically modeled as a pertubation function $\calU:\calX \to 2^{\calX}$, which specifies for each test-time example $x$ a set of possible adversarial attacks $\calU(x)\subseteq \calX$ that the attacker can choose from at test-time \citep[][]{pmlr-v99-montasser19a}. The robust risk of a predictor $\hat{h}$ is then defined as: $\Ex_{(x,y)\sim\calD} \insquare{\sup_{z\in \calU(x)} \ind\insquare{\hat{h}(z)\neq y}}$.
\removed{
\begin{equation*}
\label{eqn:rob-risk}
    \Ex_{(x,y)\sim\calD} \insquare{\sup_{z\in \calU(x)} \ind\insquare{\hat{h}(z)\neq y}}.
\end{equation*}
}
On the other hand, the framework we consider in this paper can be viewed as restricting a test-time attacker to commit to a set of attacks $\calT=\SET{T:\calX\to\calX}$ without knowledge of the test-time samples, and the risk of a predictor $\hat{h}$ is then defined as: $\sup_{T\in\calT} \Ex_{(x,y)\sim \calD} \ind\insquare{\hat{h}(T(x))\neq y}$.
\removed{
\begin{equation*}
    \label{eqn:inv-risk}
    \sup_{T\in\calT} \Ex_{(x,y)\sim \calD} \ind\insquare{\hat{h}(T(x))\neq y}.
\end{equation*}
}
While less general, our framework still captures several interesting adversarial attacks in practice which are constructed before seeing test-time examples, such as adversarial patches in vision tasks \citep{brown2017adversarial, karmon2018lavan} and attack prompts for large language models \citep{DBLP:journals/corr/abs-2307-15043} can be represented with linear transformations. 

\section{Unknown Hypothesis Class: Algorithmic Reductions to ERM} 
\label{sec:reduction}

Implementing the learning rule in \cref{eqn:erm-inv} crucially requires knowing the base hypothesis class $\calH$ and the transformations $\calT$, which may not be feasible in many scenarios. Moreover, in many applications we only have black-box access to an off-the-shelve supervised learning method such as an \ERM for $\calH$. Hence, in this section, we study the following question:
\begin{center}
   Can we solve Objective~\ref{eqn:dro} using only an \ERM oracle for $\calH$? 
\end{center}
We prove yes, and we present next generic oracle-efficient reductions solving Objective~\ref{eqn:dro} using only an \ERM oracle for $\calH$. We consider two cases,
\paragraph{Realizable Case.} When $\OPT_{\infty}=0$, i.e., $\exists h^\star\in \calH$ such that $\forall T \in \calT: \err(h^\star, T(\calD)) = 0$, there is a simple reduction to solve Objective~\ref{eqn:dro} using a {\em single call} to an \ERM oracle for $\calH$. The idea is to inflate the training dataset $S$ to include all possible transformations $\calT(S) = \SET{(T(x), y): (x,y)\in S \wedge T\in\calT}$ (similar to data augmentation), and then run $\ERM$ on $\calT(S)$. Formal guarantee and proof are deferred to \cref{appendix:3}. It is also possible, via a fairly standard boosting argument, to achieve a similar learning guarantee using multiple ERM calls (specifically, $O(\log\abs{\calT(S)}) \leq O(\log(\abs{S}|\mathcal{T}|))$), where each ERM call is on a sample of size $O(\text{vc}(\mathcal{H}))$. So, we get a tradeoff between the size of a dataset given to ERM on a single call, and the total number of calls to ERM. 

\paragraph{Agnostic Case.}When $\OPT_\infty >0$, the simple reduction above no longer works. Specifically, the issue is that running a single \ERM on the inflation $\calT(S)$ effectively minimizes average error over transformations $T\in \calT$ as opposed to minimizing maximum error over transformations $T\in \calT$. So, $\OPT_\infty >0$, by definition, implies there is no predictor $h\in\calH$ that is consistent (i.e., zero error) on every transformation $T(S), T\in\calT$, thus minimizing average error over transformations can be bad. 

To overcome this limitation, we present a different reduction (\cref{alg:agnostic}) that minimizes Objective~\ref{eqn:dro} by solving a zero-sum game where the $\calH$-player runs \ERM and the $\calT$-player runs Multiplicative Weights \citep{DBLP:journals/jcss/FreundS97}. This can be viewed as solving a sequence of weighted-\ERM problems (with weights over transformations), where Multiplicative Weights determines the weight of each transformation. 

\begin{algorithm2e}
\caption{Reduction to Minimize Worst-Case Risk}
\label{alg:agnostic}
\SetKwInput{KwInput}{Input}                
\SetKwInput{KwOutput}{Output}              
\DontPrintSemicolon
  \KwInput{Black-box $\ERM_\calH$, dataset $S=\SET{(x_1,y_1),\dots,(x_m,y_m)}$, and transformations $\calT$.}
For each $T \in \calT$, set $Q_1(T) = \frac{1}{\abs{\calT}}$.\;
Set number of rounds $R=\frac{8\ln \abs{\calT}}{\eps^2}$.\;
\For{$1\leq r\leq R$}{
    Run $\ERM_\calH$ on $m_{\ERM}$ i.i.d. samples drawn from the distribution induced by $Q_r$ over $\calT$ and ${\rm Unif}(S)$, and let $h_r$ denote its output.\;
    For each $T\in \calT$, update 
    $Q_{r+1}(T) = \frac{Q_r(T) \exp\inparen{-\eta(1-\err(h_r, T(S)))}}{Z_r}$, where $Z_r$ is a normalization factor such that $Q_{r+1}$ is a distribution.\;
}
\KwOutput{$\frac{1}{R}\sum_{t=1}^{R} h_r$.}
\end{algorithm2e}

\begin{theorem}
\label{theorem:agnostic}
    For any class $\calH$, collection of transformations $\calT$, distribution $\calD$ and any $\eps,\delta\in (0,\nicefrac12)$, with probability at least $1-\delta$ over $S\sim \calD^{m(\eps,\delta)}$, where $m(\eps,\delta) \leq O(\frac{\CH+\log(1/\delta)}{\eps^2})$, running \cref{alg:agnostic} on $S$ for $R\geq\frac{8\ln \abs{\calT}}{\eps^2}$ rounds produces $\Bar{h} = \frac{1}{R}\sum_{r=1}^{R} h_r$ satisfying
    \[\forall T\in \calT: \Prob_{\substack{(x,y)\sim D\\r\sim {\rm Unif}\SET{1,\dots, R}}}\insquare{h_r(T(x))\neq y} \leq \OPT_\infty + \eps.\]
\end{theorem}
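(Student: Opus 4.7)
The plan is to view \cref{alg:agnostic} as no-regret dynamics on the empirical zero-sum game
\[\min_{h\in \calH}\max_{T\in\calT}\err(h, T(S)),\]
with value $\OPT_\infty^S := \min_{h\in\calH}\max_{T\in\calT}\err(h, T(S))$. The $\calT$-player runs Multiplicative Weights against the loss $\ell^r_T := 1 - \err(h_r, T(S))\in[0,1]$, while the $\calH$-player approximately best-responds via the black-box $\ERM_\calH$ call. Two uniform convergence arguments glue the empirical game to the population objective: one for $\calH\circ \calT$ (to move between $\err(\cdot, T(S))$ and $\err(\cdot, T(\calD))$), and one for $\calH$ alone (to certify that each ERM call is an approximate best response).

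First, invoke Proposition~\ref{proposition:uniform-conv} with $m(\eps,\delta) = O((\CH+\log(1/\delta))/\eps^2)$; on the resulting event (probability $\geq 1-\delta/2$) we have $|\err(h, T(S)) - \err(h,T(\calD))| \leq \eps/4$ for every $h\in\calH$ and $T\in\calT$, and hence also $|\OPT_\infty^S - \OPT_\infty|\leq \eps/4$. Since every $h_r\in\calH$, this converts any bound on $\frac{1}{R}\sum_r \err(h_r, T(S))$ into the corresponding population bound at cost $\eps/4$. Next, for each round $r$, choose $m_{\ERM} = O((\vc(\calH)+\log(R/\delta))/\eps^2)$; by standard VC uniform convergence applied to the sampling distribution induced by $Q_r\otimes {\rm Unif}(S)$ on $\calX\times\calY$, the output $h_r$ of $\ERM_\calH$ satisfies
\[\Ex_{T\sim Q_r}[\err(h_r, T(S))] \leq \min_{h\in\calH}\Ex_{T\sim Q_r}[\err(h, T(S))] + \eps/4,\]
with probability $\geq 1 - \delta/(2R)$; union bound across the $R$ rounds to make all such best-response events hold simultaneously.

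Third, apply the MW regret bound for $\calT$ with $R \geq 8\ln|\calT|/\eps^2$ and a suitable $\eta = \Theta(\sqrt{\ln|\calT|/R})$: converting back from losses to errors,
\[\max_{T\in\calT}\frac{1}{R}\sum_{r=1}^{R}\err(h_r, T(S)) \leq \frac{1}{R}\sum_{r=1}^{R}\Ex_{T\sim Q_r}[\err(h_r, T(S))] + \eps/4.\]
Chaining with the approximate best-response inequality and using the trivial minimax direction $\min_h \Ex_{T\sim Q_r}[\err(h,T(S))] \leq \min_h \max_T \err(h, T(S)) = \OPT_\infty^S$, we conclude
\[\max_{T\in\calT}\frac{1}{R}\sum_{r=1}^{R}\err(h_r, T(S)) \leq \OPT_\infty^S + \eps/2.\]
Finally, invoke uniform convergence once more to move each $\err(h_r, T(S))$ to $\err(h_r, T(\calD))$, and note $\frac{1}{R}\sum_r \err(h_r, T(\calD)) = \Pr_{(x,y)\sim\calD,\ r\sim {\rm Unif}}[h_r(T(x))\neq y]$; this yields the desired bound $\OPT_\infty^S + 3\eps/4 \leq \OPT_\infty + \eps$ for every $T\in\calT$.

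The main obstacle is the approximate best-response step: the $\calH$-player does not literally minimize $\Ex_{T\sim Q_r}[\err(h, T(S))]$, but only minimizes its empirical proxy on $m_{\ERM}$ fresh samples, and this must hold across all $R$ rounds simultaneously. Handling this requires a careful choice of $m_{\ERM}$ (scaling with $\vc(\calH)$ and $\log(R/\delta)$) and union bounding; the sample complexity $m(\eps,\delta)$ stated in the theorem governs only the outer sample $S$ used for uniform convergence on $\calH\circ\calT$, while the per-round ERM samples are drawn internally by the reduction. Everything else is a direct combination of classical MW regret with PAC concentration.
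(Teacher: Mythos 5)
Your proposal is correct and follows essentially the same route as the paper: MW regret for the $\calT$-player (Lemma~\ref{lemma:regret}), the $\ERM_\calH$ call as an approximate best response, and uniform convergence over $\calH\circ\calT$ (Proposition~\ref{proposition:uniform-conv}) to transfer empirical bounds to the population. One minor point where you are actually \emph{more} careful than the paper: you explicitly set $m_{\ERM}$ to scale with $\vc(\calH)+\log(R/\delta)$ and union-bound the approximate-best-response events across all $R$ rounds, whereas the paper simply invokes ``$\ERM_\calH$ is an $(\eps,\delta)$-agnostic-PAC-learner'' without spelling out the per-round failure probability or the union bound; your bookkeeping fills that small gap.
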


\begin{remark}
    When $\calT$ is a {\em finite} collection of transformations, we can bound $\vc(\calH \circ \calT)$ from above by $O(\vc(\calH) + \log\abs{\calT})$ using the Sauer-Shelah-Perels Lemma \citep{DBLP:journals/jct/Sauer72}. See Lemma~\ref{lemma:finiteT} and proof in Appendix~\ref{sec:vcbouds}. 
\end{remark}

\begin{proof}[Proof of \cref{theorem:agnostic}]
Let $S=\SET{(x_1,y_1), \dots, (x_m, y_m)}$ be an arbitrary dataset. By setting $R\geq \frac{8\ln \abs{\calT}}{\eps^2}$ and invoking Lemma~\ref{lemma:regret}, which is a helpful lemma (statement and proof in \cref{appendix:3}) that instantiates the regret guarantee of Multiplicative Weights in our context, we are guaranteed that \cref{alg:agnostic} produces a sequence of distributions $Q_1, \dots, Q_R$ over $\calT$ that satisfy
\[\max_{T\in \calT}~~ \frac{1}{R}\sum_{r=1}^{R}\err\inparen{h_r, T(S)} \leq \frac{1}{R} \sum_{r=1}^{R} \Ex_{T\sim Q_r}\err(h_r, T(S)) + \frac{\eps}{4}.\]
At each round $r$, observe that Step 4 in \cref{alg:agnostic} draws an i.i.d. sample from a distribution $P_r$ over $\calX\times \calY$ that is defined by $Q_r$ over $\calT$ and ${\rm Unif}(S)$, and since $\ERM_\calH$ is an $(\eps,\delta)$-agnostic-PAC-learner for $\calH$, Step 5 guarantees that
{\small
\begin{equation*}
\begin{split}
    \Ex_{T\sim Q_r}\err(h_r, T(S)) = \Ex_{T\sim Q_r} \frac{1}{m} \sum_{i=1}^{m} \ind\SET{h_r(T(x_i))\neq y_i}\leq \min_{h\in \calH} \Ex_{T\sim Q_r}\err(h, T(S)) + \frac\eps4 \leq \min_{h^\star\in \calH} \max_{T\in\calT} \err(h^\star, T(S)) + \frac\eps4.
\end{split}
\end{equation*}
}
Combining the above inequalities implies that
\[\max_{T\in \calT}~~ \frac{1}{R}\sum_{r=1}^{R}\err\inparen{h_r, T(S)} \leq \min_{h^\star\in \calH} \max_{T\in\calT} \err(h^\star, T(S)) + \frac\eps2.\]
Finally, by appealing to uniform convergence over $\calH\circ \calT$ (Proposition~\ref{proposition:uniform-conv}), with probability at least $1-\delta$ over $S\sim \calD^m$,
\begin{align*}
    \max_{T\in \calT}\frac{1}{R}\sum_{r=1}^{R}\err\inparen{h_r, T(\calD)} &\leq \max_{T\in \calT} \frac{1}{R}\sum_{r=1}^{R}\err\inparen{h_r, T(S)} + \frac{\eps}{4}\leq \min_{h^\star\in \calH} \max_{T\in\calT} \err(h^\star, T(S)) + \frac{\eps}{2} + \frac{\eps}{4}\\
    &\leq \min_{h^\star\in \calH} \max_{T\in\calT} \err(h^\star, T(\calD)) +\frac{\eps}{2}+2\frac{\eps}{4}= \OPT_\infty + \eps.\qedhere
\end{align*}
\end{proof}

\paragraph{On finiteness of $\calT$.}We argue informally that requiring $\calT$ to be finite is necessary in general when only an $\ERM$ oracle for $\calH$ is allowed. For example, consider a distribution supported on a single point $(x, -)$ on the real line where $x = 5$, and transformations $T_i(x)= x + i$ for all $i \geq 1$ induced by some collection $\SET{T_i}_{i\in \bbN}$. Calling ERM on a finite subset of these transformations $T_{i_1}, \dots, T_{i_k}$ could return a predictor that labels $x, x+i_1, x+i_2, \dots, x+{i_k}$ with a label $-$ and labels $x+{i_k}+1, \dots$ with $+$ (e.g., if $\calH$ is thresholds) which fails to satisfy Objective~\ref{eqn:dro}. But it would be interesting to explore additional structural conditions that would enable handling infinite $\calT$, and leave this to future work.

\section{Unknown Invariant Transformations}
\label{sec:unknown} 

When we have a large collection of transformations $\calT$ and there is uncertainty about which transformations $T\in \calT$ under-which we can simultaneously achieve low error using a base class $\calH$, the learning rule presented in \cref{sec:learningrule} (Equation~\ref{eqn:dro}) can perform badly. We illustrate this with the following concrete example: 

\begin{exa}[]
Consider a class $\calH=\SET{h_1,h_2, h_3}$, a collection of transformations $\calT=\SET{T_1, T_2, T_3}$, and a distribution $\calD$ with risks (errors) as reported in the table.
\begin{table}[H]
\centering
\begin{tabular}{@{}llll@{}}
\toprule
      & $T_1(\mathcal{D})$ & $T_2(\mathcal{D})$ & $T_3(\mathcal{D})$ \\ \midrule
$h_1$ & $1\%$                  & $1\%$      & $49\%$     \\
$h_2$ & $1\%$                  & $49\%$      & $49\%$     \\
$h_3$ & $49\%$      & $49\%$      & $49\%$     \\ \bottomrule
\end{tabular}
\end{table}
Then, solving Objective~\ref{eqn:dro} may return predictor $h_3$ where $\forall T\in\calT: \err(h_3,T(\calD))= 49\%$, since we only need to compete with the worst-case risk $\OPT_{\infty}=49\%$. However, predictor $h_1$ is arguably better since it achieves a low error of $1\%$ on at least two out of the three transformations.
\end{exa}

To address this limitation, we switch to a different learning goal---achieving low error under as many transformations as possible. We present next a different generic learning rule for any class $\calH$ and any collection of transformations $\calT$, that enjoys a different guarantee from the learning rule presented in \cref{sec:learningrule}. In particular, it can be thought of as greedy since it maximizes the number of transformations under which low empirical error is possible, but also conservative since it ignores transformations under which low empirical error is not possible. Specifically, given a training dataset $S$, the learning rule searches for a predictor $\hat{h}\in\calH$ that achieves low empirical error on as many transformations $T\in \calT$ as possible, say $\err(\hat{h}, T(S))\leq \eps$.

\begin{equation}
    \label{eqn:unknown}
    \hat{h} \in \argmax_{h\in \calH} \sum_{T\in\calT}\ind\insquare{\err(h, T(S))\leq \eps}.
\end{equation}

Another way of thinking about this learning rule is that it provides us with more flexibility in choosing the collection of transformations $\calT$, since the learning rule is not stringent on achieving low error on all transformations but instead attempts to achieve low error on as many transformations as allowed by the base class $\calH$. Thus, this is useful in situations where there is uncertainty in choosing the collection of transformations. We present next the formal learning guarantee for this learning rule,  

\begin{theorem}
\label{theorem:unknown}
For any class $\calH$, any {\em countable} collection of transformations $\calT$, any distribution $\calD$ and any $\eps, \delta\in (0,1)$, with probability at least $1-\delta$ over $S\sim \calD^m$, where $m=O\inparen{\frac{\vc(\calH\circ\calT)\log(1/\eps)+\log(1/\delta)}{\eps}}$, then
{\small
\[\sum_{T\in\calT} \ind\insquare{\err(\hat{h}, T(\calD))\leq 3\eps} \geq \max_{h^\star\in\calH} \sum_{T\in\calT}\ind\insquare{\err(h^\star, T(\calD))\leq \frac{\eps}{3}}.\]
}
Furthermore, it holds that $\forall T\in \calT$: $\err(\hat{h}, T(\calD)) \leq \err(\hat{h}, T(S)) + \sqrt{\err(\hat{h}, T(S))\frac{\eps}{3}} + \frac{\eps}{3}$.
\end{theorem}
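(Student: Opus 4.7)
\textbf{Proof plan for Theorem \ref{theorem:unknown}.}

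The plan is to reduce everything to a two-sided multiplicative (Vapnik-style relative deviation) uniform convergence bound over the composed class $\calH\circ\calT$. Specifically, I will invoke the bound that with probability at least $1-\delta$ over $S\sim \calD^m$, simultaneously for all $h\in\calH$ and all $T\in\calT$,
\begin{align*}
\err(h, T(\calD)) &\leq \err(h, T(S)) + \sqrt{\err(h, T(S))\cdot \alpha} + \alpha,\\
\err(h, T(S)) &\leq \err(h, T(\calD)) + \sqrt{\err(h, T(\calD))\cdot \alpha} + \alpha,
\end{align*}
where $\alpha \leq \eps/3$ under the stated sample size $m=O\!\left(\tfrac{\vc(\calH\circ\calT)\log(1/\eps)+\log(1/\delta)}{\eps}\right)$. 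Such a two-sided relative deviation bound is standard (it follows from the Vapnik-Chervonenkis ``fast rate'' inequality applied to $\calH\circ \calT$, combined with a union bound across the two directions); countability of $\calT$ is not needed for this step, only $\vc(\calH\circ\calT)<\infty$. The second conclusion of the theorem is literally this upper-tail inequality instantiated at $\hat{h}$.

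Next, I will use this two-sided bound to chain $\calD$ to $S$ and back. Fix an arbitrary $h^\star\in\calH$ and any $T\in\calT$ with $\err(h^\star, T(\calD))\leq \eps/3$. Plugging into the lower-tail inequality yields
\[ \err(h^\star, T(S)) \leq \tfrac{\eps}{3} + \sqrt{\tfrac{\eps}{3}\cdot \tfrac{\eps}{3}} + \tfrac{\eps}{3} = \eps.\]
Hence, for every $h^\star$, $\sum_{T\in\calT}\ind[\err(h^\star, T(S))\leq \eps] \geq \sum_{T\in\calT}\ind[\err(h^\star, T(\calD))\leq \eps/3]$. By the definition of $\hat{h}$ in \cref{eqn:unknown},
\[ \sum_{T\in\calT}\ind\insquare{\err(\hat{h}, T(S))\leq \eps} \geq \max_{h^\star\in\calH}\sum_{T\in\calT}\ind\insquare{\err(h^\star, T(\calD))\leq \tfrac{\eps}{3}}. \]

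Finally, I will transfer $\hat h$'s sample-level guarantee back to the distribution. For any $T\in\calT$ with $\err(\hat{h}, T(S))\leq \eps$, the upper-tail inequality gives
\[ \err(\hat{h}, T(\calD)) \leq \eps + \sqrt{\eps\cdot \tfrac{\eps}{3}} + \tfrac{\eps}{3} \leq 3\eps.\]
Therefore $\sum_{T\in\calT}\ind[\err(\hat{h}, T(\calD))\leq 3\eps] \geq \sum_{T\in\calT}\ind[\err(\hat{h}, T(S))\leq \eps]$, which combined with the previous display yields the first claim.

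The only nontrivial ingredient is the two-sided multiplicative uniform convergence bound over $\calH\circ\calT$ at the rate $m^{-1}$ rather than $m^{-1/2}$; everything else is bookkeeping with the $\sqrt{\cdot}$ terms. Thus the main obstacle is confirming that a relative VC deviation inequality is available at the exact quantitative form needed to collapse the factors $\eps/3$ cleanly into the threshold $3\eps$ (and into $\eps$ at the intermediate step), which is why the constants $1/3$ and $3$ appear in the statement; I would state and cite a standard relative-deviation proposition (analogous to Proposition~\ref{proposition:uniform-conv} but with the Bernstein/fast-rate form) to discharge this.
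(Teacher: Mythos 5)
Your proposal is correct and follows essentially the same route as the paper: both invoke a two-sided relative-deviation (optimistic rate) bound over $\calH\circ\calT$ (the paper's Proposition~\ref{proposition:opt-rates}), chain $\err(h^\star,T(\calD))\le\eps/3 \Rightarrow \err(h^\star,T(S))\le\eps$, apply the definition of $\hat h$ from \cref{eqn:unknown}, and then chain $\err(\hat h,T(S))\le\eps \Rightarrow \err(\hat h,T(\calD))\le 3\eps$. The only difference is presentation order; the substance and the arithmetic bookkeeping are identical.
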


\begin{remark}
We can generalize the result above to  any prior over the transformations $\calT$, encoded as a weight function $w: \calT \to [0,1]$ such that $\sum_{T\in \calT} w(T) \leq 1$. By maximizing the weighted version of \cref{eqn:unknown} according to $w$, it holds that $\sum_{T\in\calT} w(T)\ind_{\insquare{\err(\hat{h}, T(\calD))\leq \eps/3}} \geq \max_{h^\star\in\calH} \sum_{T\in\calT}w(T)\ind_{\insquare{\err(h^\star, T(\calD))\leq 3\eps}}$ with high probability. 
\end{remark}
 
\begin{proof}
The proof follows from the definition of $\hat{h}$ and using optimistic generalization bounds (Proposition~\ref{proposition:opt-rates}). 
By setting $m(\eps,\delta)=O\inparen{\frac{\vc(\calH\circ\calT)\log(1/\eps)+\log(1/\delta)}{\eps}}$ and invoking Proposition~\ref{proposition:opt-rates}, we have the guarantee that with probability at least $1-\delta$ over $S\sim \calD^{m(\eps,\delta)}$, $(\forall h \in \calH)(\forall T\in \calT)$:
{\small
\begin{align}
    \err(h, T(\calD)) &\leq \err(h, T(S)) + \sqrt{\err(h, T(S))\frac{\eps}{3}} + \frac{\eps}{3}, \\
    \err(h, T(S)) &\leq \err(h, T(\calD)) + \sqrt{\err(h, T(\calD))\frac{\eps}{3}} + \frac{\eps}{3}.
\end{align}
}
Since $\hat{h}\in\calH$, inequality (4) above implies that $\forall T\in\calT$ if $\err(\hat{h}, T(S))\leq \eps$ then $\err(\hat{h}, T(\calD)) \leq 3\eps$. Thus, 
\[\sum_{T\in\calT} \ind\insquare{\err(\hat{h}, T(\calD))\leq 3\eps}\geq \sum_{T\in\calT} \ind\insquare{\err(\hat{h}, T(S))\leq \eps}.\]
Furthermore, by definition, since $\hat{h}$ maximizes the empirical objective, it holds that
\[\sum_{T\in\calT} \ind\insquare{\err(\hat{h}, T(S))\leq \eps} \geq \sum_{T\in\calT} \ind\insquare{\err(h^\star, T(S))\leq \eps}.\]
Since $h^\star\in\calH$, inequality (5) above implies that $\forall T\in\calT$ if $\err(h^\star, T(\calD))\leq \nicefrac{\eps}{3}$ then $\err(h^\star, T(S)) \leq \eps$. Thus,
\[\sum_{T\in\calT} \ind\insquare{\err(h^\star, T(S))\leq \eps}\geq \sum_{T\in\calT} \ind\insquare{\err(h^\star, T(\calD))\leq \nicefrac{\eps}{3}}.\]
By combining the above three inequalities, 
{\small
\begin{equation*}
    \sum_{T\in\calT} \ind\insquare{\err(\hat{h}, T(\calD))\leq 3\eps}\geq \sum_{T\in\calT} \ind\insquare{\err(h^\star, T(\calD))\leq \nicefrac{\eps}{3}}.\qedhere
\end{equation*}
}
\end{proof} 

\section{Extension to Minimizing Worst-Case Regret}
\label{sec:regret}

When there is heterogeneity in the noise across the different distributions, \citet*{pmlr-v178-agarwal22b} argue that, in the context of distributionally robust optimization, solving Objective~\ref{eqn:dro} may {\em not} be advantageous. Additionally, they introduced a different objective (see Objective~\ref{eqn:mro}) which can be favorable to minimize. In this section, we extend our guarantees for transformation-invariant learning to this new objective which we describe next.

For each $T\in \calT$, let $\OPT_T = \inf_{h^\star_T \in \calH} \err(h^\star_T, T(\calD))$ be the smallest achievable error on transformed distribution $T(\calD)$ with hypothesis class $\calH$. Given a training sample $S=\SET{(x_1,y_1),\dots,(x_m,y_m)}\sim\calD^m$, we would like to learn a predictor $\hat{h}:\calX \to \calY$ with {\em uniformly small regret} across all transformations $T$ in $\calT$,
\begin{equation}
    \label{eqn:mro}
    \begin{split}
    \sup_{T\in \calT}\err(\hat{h}, T(\calD)) - \OPT_T \leq \inf_{h^\star\in\calH} \sup_{T\in\calT} \SET{\err(h^\star, T(\calD)) - \OPT_T}+ \eps. 
    \end{split}
\end{equation}

We illustrate with a concrete example below how solving Objective~\ref{eqn:mro} can be favorable to Objective~\ref{eqn:dro}.
\begin{exa} [Risk  vs.~Regret]
Consider a class $\calH=\SET{h_1,h_2}$, a collection of transformations $\calT=\SET{T_1, T_2, T_3, T_4}$, and a distribution $\calD$ such that with errors as reported in the table:
\begin{table}[H]
\centering
\begin{tabular}{@{}lllll@{}}
\toprule
      & $T_1(\mathcal{D})$ & $T_2(\mathcal{D})$ & $T_3(\mathcal{D})$ & $T_4(\mathcal{D})$ \\ \midrule
$h_1$ & 0                  & $\nicefrac{1}{8}$      & $\nicefrac{1}{4}$      & $\nicefrac{1}{2}$      \\
$h_2$ & $\nicefrac{1}{2}$      & $\nicefrac{1}{2}$      & $\nicefrac{1}{2}$      & $\nicefrac{1}{2}$      \\ \bottomrule
\end{tabular}
\end{table}

Thus, solving Objective~\ref{eqn:dro} may return predictor $h_2$ where $\forall T\in\calT: \err(h_2,T(\calD))= \nicefrac{1}{2}$, since we only need to compete with the worst-case risk $\OPT_{\infty}=\frac{1}{2}$. However, solving Objective~\ref{eqn:mro} will return predictor $h_1$ where $\forall T: \calT: \err(h_1, T(\calD)) \leq \OPT_T$.
\end{exa}

More generally, as highlighted by \citet*{pmlr-v178-agarwal22b}, whenever there exists $h^\star\in \calH$ satisfying $\forall T \in \calT: \err(h^\star, T(\calD)) - \OPT_T \leq \eps$, solving Objective~\ref{eqn:mro} is favorable.
We present next a generic learning rule solving Objective~\ref{eqn:mro} for any hypothesis class $\calH$ and any collection of transformations $\calT$,
\begin{equation}
\label{eqn:erm-mro}
    \hat{h} \in \argmin_{h\in \calH}\max_{T \in \calT}\SET{\frac{1}{m} \sum_{i=1}^{m} \ind\insquare{h(T(x_i)) \neq y_i} - \hat{\OPT}_T}.
\end{equation}
We present next a PAC-style learning guarantee for this learning rule with sample complexity bounded by the VC dimension of the composition of $\calH$ with $\calT$. The proof is deferred to \cref{appendix:5}.

\begin{theorem}
\label{theorem:sc-regret}
    For any class $\calH$, any collection of transformations $\calT$, any $\eps,\delta\in (0,\nicefrac12)$, any distribution $\calD$, with probability at least $1-\delta$ over $S\sim \calD^{m(\eps,\delta)}$ where $m(\eps,\delta) = O\inparen{\frac{\CH+ \log(1/\delta)}{\eps^2}}$,
    \begin{equation*}
        \sup_{T\in \calT}\SET{\err(\hat{h}, T(D)) -\OPT_T} \leq \inf_{h^\star\in\calH} \sup_{T\in\calT} \SET{\err(h^\star, T(\calD)) - \OPT_T} + \eps.
    \end{equation*}
\end{theorem}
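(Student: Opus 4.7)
The plan is to mirror the proof of \cref{theorem:drosc}, but with an additional step that controls the empirical proxy $\hat{\OPT}_T$ for $\OPT_T$. The learning rule in \cref{eqn:erm-mro} is exactly the ERM over the ``shifted'' loss $\ell_T(h) := \err(h, T(\cdot)) - \hat{\OPT}_T$, so the only new ingredient is ensuring that the shift is accurate uniformly in $T \in \calT$.

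First, I would invoke the uniform convergence guarantee over the composition class $\calH \circ \calT$ (Proposition~\ref{proposition:uniform-conv}) with precision $\eps/4$. Setting $m(\eps,\delta) = O\inparen{\frac{\vc(\calH\circ\calT) + \log(1/\delta)}{\eps^2}}$ guarantees that, with probability at least $1-\delta$, for every $h \in \calH$ and every $T\in \calT$, $|\err(h, T(S)) - \err(h, T(\calD))| \leq \eps/4$. From this I would immediately deduce that $|\hat{\OPT}_T - \OPT_T|\leq \eps/4$ for every $T\in \calT$: for any $T$, picking any near-minimizer $h_T^\star$ of the population risk and, respectively, any near-minimizer $\hat{h}_T$ of the empirical risk and applying the above inequality to each gives both $\hat{\OPT}_T \leq \OPT_T + \eps/4$ and $\OPT_T \leq \hat{\OPT}_T + \eps/4$.

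Next I would combine these two ingredients in the standard ERM-style chain. Fix an arbitrary $h^\star\in \calH$ achieving (or $\eps$-close to) $\inf_{h\in\calH}\sup_{T\in\calT}\{\err(h, T(\calD)) - \OPT_T\}$. For any $T$, the uniform convergence bound gives $\err(\hat{h}, T(\calD)) \leq \err(\hat{h}, T(S)) + \eps/4$ and $-\OPT_T \leq -\hat{\OPT}_T + \eps/4$, so
\[
\sup_{T\in\calT}\SET{\err(\hat{h}, T(\calD)) - \OPT_T} \leq \sup_{T\in\calT}\SET{\err(\hat{h}, T(S)) - \hat{\OPT}_T} + \frac{\eps}{2}.
\]
By the defining optimality of $\hat{h}$ in \cref{eqn:erm-mro}, the empirical worst-case regret of $\hat{h}$ is at most that of $h^\star$; then a symmetric application of uniform convergence and of $|\hat{\OPT}_T - \OPT_T| \leq \eps/4$ gives
\[
\sup_{T\in\calT}\SET{\err(h^\star, T(S)) - \hat{\OPT}_T} \leq \sup_{T\in\calT}\SET{\err(h^\star, T(\calD)) - \OPT_T} + \frac{\eps}{2}.
\]
Chaining the two displays yields the claimed bound with slack $\eps$ (after absorbing the $\eps$-slack in choosing $h^\star$ by a constant-factor rescaling of $\eps$).

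I do not expect a serious obstacle: conceptually the proof is identical to that of \cref{theorem:drosc}, and the only new step is the passage from uniform convergence of the losses to uniform convergence of the minima $\hat{\OPT}_T \to \OPT_T$, which is a routine sup-argument. The one point requiring a touch of care is that if $\OPT_T$ is attained only in the limit (so $h^\star_T$ is an $\eps$-minimizer rather than a minimizer), one must take infima over $\eps$-nets or absorb an extra additive $\eps/4$ into the bound; this only affects constants.
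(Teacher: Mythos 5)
Your proof is correct and follows essentially the same route as the paper's: uniform convergence over $\calH\circ\calT$ (Proposition~\ref{proposition:uniform-conv}) simultaneously controls $|\err(h,T(S))-\err(h,T(\calD))|$ for all $h,T$ and hence $|\hat{\OPT}_T-\OPT_T|$, after which the standard ERM chaining via the optimality of $\hat h$ in \cref{eqn:erm-mro} and a near-optimal comparator $h^\star$ closes the argument. The only difference is cosmetic bookkeeping of constants (you calibrate to $\eps/4$ up front; the paper works with $\eps$ and rescales at the end).
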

\paragraph{Algorithmic Reduction to ERM.} Using ideas and techniques similar to those used in \cref{sec:reduction}, we develop a generic oracle-efficient reduction solving Objective~\ref{eqn:mro} using only an \ERM oracle for $\calH$. Theorem, proof, and algorithm are deferred to \cref{appendix:5}.

\section{Basic Experiment}
\label{sec:experiment}

We present results for a basic experiment on learning Boolean functions on the hypercube $\SET{\pm 1}^d$. We consider a uniform distribution $D$ over $\SET{\pm 1}^d$ and two target functions: (1) $f^\star_1(x) = \Pi_{i=1}^{d}x_i$, the parity function, and (2) $f^\star_2(x) = \sign(\sum_{j=0}^{2}(\Pi_{i=1}^{d/3}x_{j(d/3)+i}))$, a majority-of-subparities function. We consider transformations $\calT_1$, $\calT_2$ under which $f^\star_1, f^\star_2$ are invariant, respectively (see \cref{sec:learningrule}). Since $D$ is uniform, note that for any $\hat{h}$: $\sup_{T\in \calT} \err(\hat{h}, T(D_{f^\star})) = \err(\hat{h},  D_{f^\star})$.
\removed{
\begin{itemize}
    \item $f^\star_1(x) = \Pi_{i=1}^{d}x_i$, i.e., the parity function.
    \item $f^\star_2(x) = \sign\inparen{\sum_{j=0}^{2} \inparen{\Pi_{i=1}^{d/3}x_{j(d/3)+i}}}$, i.e., a majority-of-subparities function.
\end{itemize}
}

\paragraph{Algorithms.} We use a two-layer feed-forward neural network architecture with $512$ hidden units as our hypothesis class $\calH$. We use the squared loss and consider two training algorithms. First, the baseline is running standard mini-batch SGD on training examples. Second, as a heuristic to implement \cref{eqn:erm-inv}, we run mini-batch SGD on training examples and permutations of them. Specifically, in each step we replace correctly classified training examples in a mini-batch with random permutations of them (drawn from $\calT$), and then perform an SGD update on this modified mini-batch. We set the mini-batch size to $1$ and the learning rate to $0.01$. Results are averaged over 5 runs with different seeds and are reported in \cref{fig:parities}. We ran experiments on freely available Google CoLab T4 GPUs, and used Python and PyTorch to implement code.
\begin{figure}
    \centering
    \includegraphics[width=0.48\linewidth]{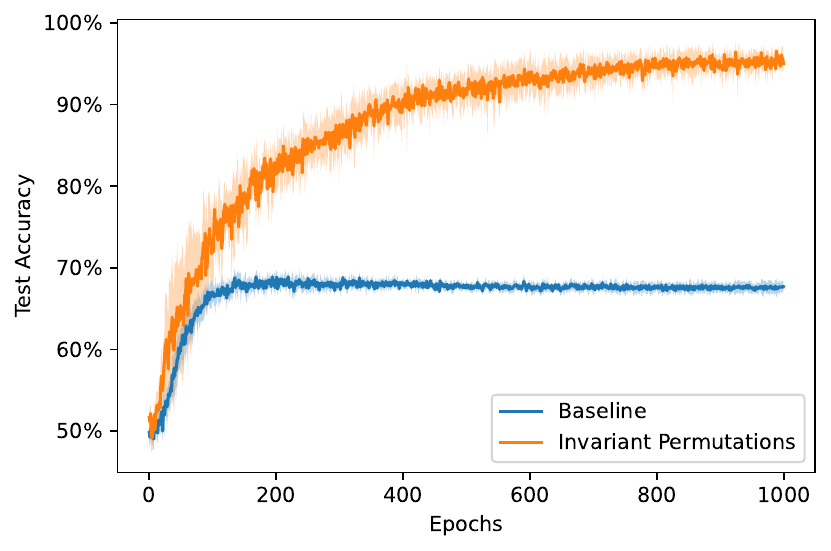}
    \includegraphics[width=0.48\linewidth]{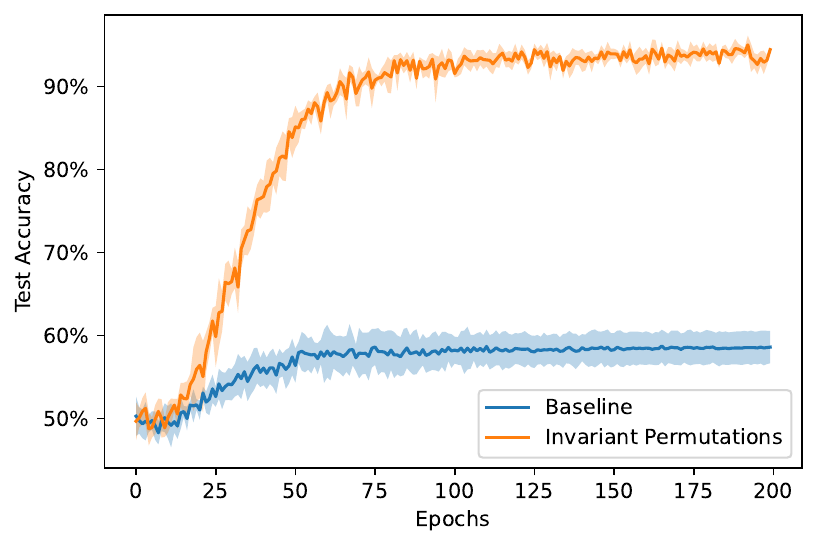}
    \caption{Left plot is for learning $f^\star_1$, the full parity function in dimension $18$, with a train set size of $7000$. Transformations are sampled from $\calT_1$: the set of {\em all} permutations. Right plot is for learning $f^\star_2$, a majority-of-subparities function in dimension $21$, with a train set size of $5000$. Transformations are sampled from $\calT_2$: permutations on which $f^\star_2$ is invariant. In each case, the test set size is 1000.}
    \label{fig:parities}
    \vspace{-5pt}
\end{figure}

\section{Related Work and Discussion}
\label{sec:discussion}
\paragraph{Covariate Shift, Domain Adaptation, Transfer Learning.} There is substantial literature studying theoretical guarantees for learning when there is a ``source'' distribution $P$ and a ``target'' distribution $Q$ \citep[see e.g., survey by][]{DBLP:journals/corr/abs-2004-11829, quinonero2008dataset}. Many of these works explore structural relationships between $P$ and $Q$ using various divergence measures (e.g., total variation distance or KL divergence), sometimes incorporating the structure of the hypothesis class $\calH$ \citep[e.g., ][]{DBLP:journals/ml/Ben-DavidBCKPV10, DBLP:conf/nips/HannekeK19}. Sometimes access to unlabeled (or few labeled) samples from $Q$ is assumed. Our work differs from this line of work by expressing the structural relationship between $P$ and $Q$ in terms of a transformation $T$ where $Q=T(P)$.  

\paragraph{Distributionally Robust Optimization.} With roots in optimization literature \citep[see e.g.,][]{DBLP:books/degruyter/Ben-TalGN09, DBLP:journals/siamjo/Shapiro17}, this framework has been further studied recently in the machine learning literature \citep[see e.g.,][]{duchi2021learning}. The goal is to learn a predictor $\hat{h}$ that minimizes the worst-case error $\sup_{Q\in\calP} \err(\hat{h}, Q)$, where $\calP$ is a collection of distributions. Most prior work adopting this framework has focused on distributions $\calP$ that are close to a ``source'' distribution $\calD$ in some divergence measure \citep[e.g., $f$-divergences][]{DBLP:conf/nips/NamkoongD16}. Instead of relying on divergence measures, our work describes the collection $\calP$ through data transformations $\calT$ of $\calD$: $\SET{T(\calD)}_{T\in \calT}$ which may be operationally simpler. 

\paragraph{Multi-Distribution Learning.} This line of work focuses on the setting where there are $k$ arbitrary distributions $\calD_1,\dots,\calD_k$ to be learned uniformly well, where sample access to each distribution $\calD_i$ is provided \citep[see e.g.,][]{DBLP:conf/nips/HaghtalabJ022}. In contrast, our setting involves access to a single distribution $\calD$ and transformations $T_1,\dots, T_k$, that together describe the target distributions: $T_1(\calD),\dots, T_k(\calD)$. From a sample complexity standpoint, multi-distribution learning requires sample complexity scaling linearly in $k$ while in our case it is possible to learn with sample complexity scaling logarithmically in $k$ (see \cref{theorem:agnostic} and Lemma~\ref{lemma:finiteT}). The lower sample complexity in our approach is primarily due to the assumption that the transformations $T_1,\dots, T_k$ are known in advance, allowing the learner to generate $k$ samples from a single draw of $\calD$. In contrast, in multi-distribution learning, the learner pays for $k$ samples in order to see one sample from each of $\calD_1,\dots,\calD_k$. Therefore, while the sample complexity is lower in our setting, this advantage arises from the additional information/structure provided rather than an inherent improvement over the more general setting of multi-distribution learning. From an algorithmic standpoint, our reduction algorithms employ similar techniques based on regret minimization and solving zero-sum games \citep{DBLP:journals/jcss/FreundS97}.

\paragraph{Invariant Risk Minimization (IRM).} This is another formulation addressing domain generalization or learning a predictor that performs well across different environments \citep{DBLP:journals/corr/abs-1907-02893}. One main difference from our work is that in the IRM framework training examples from different environments are observed and no explicit description of the transformations is provided. Furthermore, to argue about generalization on environments unseen during training, a structural causal model is considered. Recent works have highlighted some drawbacks of IRM \citep{DBLP:conf/iclr/RosenfeldRR21,DBLP:conf/aistats/KamathTSS21}. For example, how in some cases ERM outperforms IRM on out-of-distribution generalization, and the sensitivity of IRM to finite empirical samples vs.~infinite population samples.  

\paragraph{Data Augmentation.}
A commonly used technique in learning under invariant transformations is data augmentation, which involves adding transformed data into the training set and training a model with the augmented data. Theoretical guarantees of data augmentation have received significant attention recently \citep[see e.g.,][]{Dao2019,Chen2020,Lyle2020,shao2022theory,pmlr-v162-shen22a}.
In this line of research, it is common to assume that the transformations form a group, and the learning goal is to achieve good performance under the ``source'' distribution by leveraging knowledge of the invariant transformations structure.
In contrast, our work does not make the group assumption over transformations, and our goal is to learn a model with low loss under all possible ``target'' distributions parameterized by transformations of the ``source'' distribution.

\paragraph{Multi-Task Learning.} \citet{ben2008notion} studied conditions underwhich a set of transformations $\calT$ can help with multi-task learning, assuming that $\calT$ forms a group and that $\calH$ is closed under $\calT$. Our work does not make such assumptions, and studies a different learning objective.

\section*{Acknowledgments}

We thank Suriya Gunasekar for insightful discussions at early stages of this work. This work was done in part under the NSF-Simons Collaboration on the Theoretical Foundations of Deep Learning. OM was supported by a FODSI-Simons postdoctoral fellowship at UC Berkeley. HS was supported in part by Harvard CMSA. This work was conducted primarily while HS was at TTIC and supported in part by the National Science Foundation under grants CCF-2212968 and ECCS-2216899, and by the Defense Advanced Research Projects Agency under cooperative agreement HR00112020003. The views expressed in this work do not necessarily reflect the position or the policy of the Government and no official endorsement should be inferred. 

\bibliography{learning}

\newpage
\appendix

\section{Uniform Convergence} 
\label{appendix:1}
We can use tools from VC theory  \citep{vapnik:71, vapnik:74} to obtain uniform convergence guarantees that will allow us to establish our learning guarantees and sample complexity bounds in the remainder of the paper. The starting point is a simple but key  observation concerning the hypothesis class $\calH$ and the collection of transformations $\calT$. Specifically, consider the composition of $\calH$ with $\calT$ defined as:
\begin{equation}
    \begin{split}
    \calH\circ \calT := \SET{h\circ T: h\in\calH, T\in \calT}, \text{ where } &(h\circ T)(x) = h(T(x))~~\forall x \in \calX.
    \end{split}
\end{equation}
We next apply VC theory to the class $\calH\circ\calT$ to obtain our uniform convergence guarantees. Formally, 
\begin{proposition}
\label{proposition:uniform-conv}
    For any class $\calH$, any collection of transformations $\calT$, any distribution $\calD$ over $\calX\times \calY$, and any $m\in \bbN$, with probability at least $1-\delta$ over $S\sim \calD^m$: $\forall h \in \calH, \forall T\in \calT$,  
    \begin{equation*}
        \abs{ \err(h, T(S)) - \err(h, T(\calD))} \leq c \sqrt{\frac{\vc(\calH\circ \calT)+\log(\nicefrac{1}{\delta})}{m}}.
    \end{equation*}
\end{proposition}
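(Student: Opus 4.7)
The plan is to reduce the statement to the classical VC uniform convergence theorem applied to the composition class $\calH \circ \calT$. The core observation is that pushing a transformation onto the distribution is the same as pulling it onto the predictor: for any fixed $h \in \calH$ and $T \in \calT$, I can rewrite
\[
\err(h, T(\calD)) \;=\; \Pr_{(x,y)\sim \calD}\insquare{h(T(x)) \neq y} \;=\; \err(h \circ T, \calD),
\]
and likewise $\err(h, T(S)) = \err(h\circ T, S)$ for the empirical error on the raw sample $S \sim \calD^m$. Thus the quantity I want to control uniformly over $(h,T) \in \calH \times \calT$ is exactly the standard generalization gap $\abs{\err(g, S) - \err(g, \calD)}$ for $g$ ranging over the class $\calH \circ \calT$.

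Next I would verify that $\calH \circ \calT$ is a well-defined class of binary functions $\calX \to \calY$, so that classical VC theory applies directly. Each $g = h \circ T$ satisfies $g(x) = h(T(x)) \in \SET{\pm 1}$ since $T:\calX \to \calX$ and $h:\calX\to \SET{\pm 1}$, so $\calH \circ \calT \subseteq \calY^\calX$ and its VC dimension $\vc(\calH\circ\calT)$ is a well-defined, finite quantity (by hypothesis in the regime of interest).

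Then I would invoke the standard VC uniform convergence theorem \citep{vapnik:71, vapnik:74}: for any binary class $\calG$ with $\vc(\calG) = d$, any distribution $\calD$, and any $\delta \in (0,1)$, with probability at least $1-\delta$ over $S\sim \calD^m$,
\[
\sup_{g\in \calG} \abs{\err(g, S) - \err(g, \calD)} \;\leq\; c\sqrt{\frac{d + \log(1/\delta)}{m}}
\]
for some absolute constant $c>0$. Applying this to $\calG = \calH \circ \calT$, noting that the supremum over $g = h\circ T$ with $(h,T) \in \calH \times \calT$ is an upper bound on the supremum over pairs, and then translating back via the identity from the first step yields exactly the claim.

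There is essentially no substantive obstacle: the entire content of the proposition is the observation that distribution-shifts via transformations can be absorbed into the hypothesis, after which standard VC machinery is black-boxed. The only subtlety worth flagging is that $\calH \circ \calT$ can be much richer than $\calH$ alone, so the bound is only useful when $\vc(\calH\circ\calT)$ is controlled, which is precisely what the examples in \cref{sec:learningrule} (linear transformations, Boolean hypercube, finite $\calT$) are designed to establish.
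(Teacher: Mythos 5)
Your proof is correct and matches the paper's argument: both reduce the claim to the classical VC uniform convergence theorem applied to the composition class $\calH \circ \calT$, using the identity $\err(h, T(\calD)) = \err(h\circ T, \calD)$. You simply spell out the reduction a bit more explicitly than the paper's one-line proof.
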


\begin{proof}
    Since the composition $\calH\circ \calT$ is a hypothesis class consisting of functions $h\circ T$ where $h\in\calH, T\in\calT$, the claim follows from the definition of VC dimension and uniform convergence guarantees for VC classes \citep{vapnik:71, vapnik:74}. 
\end{proof}

\begin{proposition}[Optimistic Rate]
\label{proposition:opt-rates}
For any class $\calH$, any collection of transformations $\calT$, any distribution $\calD$, and any $m\in \bbN$, letting $B(m, \delta):=\frac{1}{m}\inparen{\vc(\calH\circ \calT)\log\inparen{\frac{2em}{\vc(\calH\circ \calT)}} + \log(\nicefrac{4}{\delta})}$, with probability at least $1-\delta$ over $S\sim \calD^m$: $\forall h \in \calH, \forall T\in \calT$, 
{
\begin{align*}
    \err(h, T(\calD)) \leq \err(h, T(S))& + 2\sqrt{\err(h, T(S))B(m,\delta)}+ 4B(m,\delta),\\
    \err(h, T(S)) \leq \err(h, T(\calD))& + 2\sqrt{\err(h, T(\calD))B(m,\delta)} + 4B(m,\delta).
\end{align*}
}
\end{proposition}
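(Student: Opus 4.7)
The plan is to apply a classical relative (``optimistic'') uniform convergence bound for VC classes directly to the zero--one loss class associated with $\calH\circ\calT$. The key observation is that the collection of $\SET{0,1}$-valued functions $\SET{(x,y)\mapsto \ind\insquare{h(T(x))\neq y} : h\in\calH, T\in\calT}$ on $\calX\times\calY$ has VC dimension at most $\vc(\calH\circ\calT)$: fixing the second coordinate $y\in\SET{\pm1}$, the map $u\mapsto \ind\insquare{u\neq y}$ is a bijection on $\SET{\pm 1}$, so any shattered set of labeled examples projects to a shattered set for $\calH\circ\calT$. So it suffices to prove the two inequalities for an abstract $\SET{0,1}$-valued class of VC dimension $d=\vc(\calH\circ\calT)$.

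First, I would invoke Vapnik's relative deviation inequality (see e.g.\ Vapnik, 1974; Anthony--Bartlett Theorem 5.7): for any class $\calF$ of $\SET{0,1}$-valued functions with VC dimension $d$, any distribution $P$, and any $m\in\bbN$, with probability at least $1-\delta$ over $S\sim P^m$, simultaneously for every $f\in\calF$,
\begin{equation*}
\frac{Pf-\hat P_S f}{\sqrt{Pf}}\;\leq\; 2\sqrt{B(m,\delta)}
\qquad\text{and}\qquad
\frac{\hat P_S f - P f}{\sqrt{\hat P_S f}}\;\leq\; 2\sqrt{B(m,\delta)},
\end{equation*}
where $B(m,\delta)$ is exactly the quantity in the statement. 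This arises by bounding the growth function via Sauer--Shelah, $\Pi_\calF(2m)\leq (2em/d)^d$, and taking a union bound over the two tails (absorbing a factor of $2$ inside the $\log(4/\delta)$ term). Applied to the zero--one loss class identified above, this yields the two relative bounds simultaneously for every $(h,T)\in\calH\times\calT$.

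Second, I would convert each relative inequality into the additive form stated. Write $p = \err(h,T(\calD))$, $\hat p = \err(h,T(S))$, and $\beta = B(m,\delta)$. The first relative inequality reads $p-\hat p \leq 2\sqrt{p\beta}$, i.e.\ $(\sqrt p)^2 - 2\sqrt\beta\,\sqrt p - \hat p \leq 0$. The quadratic formula gives $\sqrt p \leq \sqrt\beta + \sqrt{\beta+\hat p}$, and squaring together with $\sqrt{a+b}\leq\sqrt a + \sqrt b$ yields
\begin{equation*}
p \;\leq\; \hat p + 2\sqrt{\hat p\,\beta} + 4\beta.
\end{equation*}
The symmetric manipulation starting from $\hat p - p \leq 2\sqrt{\hat p\,\beta}$ (the second relative bound) yields the corresponding inequality with the roles of $p$ and $\hat p$ swapped, which is the second displayed inequality in the proposition.

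The main obstacle I anticipate is purely bookkeeping: matching the constants in $B(m,\delta)$---in particular the $\log(4/\delta)$ and the $2em/\vc(\calH\circ\calT)$ inside the logarithm---to those emerging from Vapnik's inequality combined with the two-sided union bound. Conceptually nothing new is needed beyond what already underlies Proposition~\ref{proposition:uniform-conv}; the only substantive change is to invoke the first-moment (relative) form of the VC inequality instead of the additive (worst-case) form, and then to convert back via the elementary quadratic manipulation above.
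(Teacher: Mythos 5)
Your proof is correct and takes essentially the same route as the paper: both reduce to a relative (``optimistic'') deviation bound for the VC class $\calH\circ\calT$, with the paper simply citing Corollary~7 of \citet{DBLP:journals/amai/CortesGM19} for what you re-derive from Vapnik's relative-deviation inequality together with the quadratic-formula conversion to additive form. Your anticipated bookkeeping concern---whether the two-sided union bound yields $\log(4/\delta)$ or $\log(8/\delta)$---is real but immaterial, since it only affects a constant inside $B(m,\delta)$.
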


\begin{proof}
    The claim follows from applying relative deviation bounds, or optimistic rates, for the composition class $\calH\circ \calT$ \citep[see e.g., Corollary 7 in][]{DBLP:journals/amai/CortesGM19}.
\end{proof}

\section{Bounding the VC dimension of $\calH$ composed with $\calT$}
\label{sec:vcbouds}
Given the relevance of $\vc(\calH\circ\calT)$ in our theoretical study, in this section we explore the relationship between $\vc(\calH\circ\calT)$ and $\vc(\calH)$ which we believe can be helpful in interpreting our results and comparing them with the sample complexity of standard PAC learning \citep[which is controlled by $\vc(\calH)$][]{blumer:89,ehrenfeucht:89}. To this end, we consider below a few general cases and prove bounds on $\vc(\calH\circ\calT)$ in terms of $\vc(\calH)$ and some form of capacity control for $\calT$. These results may be of independent interest.

\paragraph{Finitely many transformations.} When $\calT$ is a {\em finite} collection of transformations, we can bound $\vc(\calH \circ \calT)$ from above by $O(\vc(\calH) + \log\abs{\calT})$ using the Sauer-Shelah-Perels Lemma \citep{DBLP:journals/jct/Sauer72},

\begin{lemma}
\label{lemma:finiteT}
    For any class $\calH$ and any \emph{finite} collection $\calT$, $\vc(\calH \circ \calT) \leq O(\vc(\calH) + \log\abs{\calT})$.
\end{lemma}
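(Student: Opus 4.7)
The plan is to bound the growth function of $\calH \circ \calT$ in terms of the growth function of $\calH$ and the cardinality of $\calT$, then invoke the Sauer--Shelah--Perles lemma and solve the resulting implicit inequality.

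First, I would establish the key structural bound on the growth function. For any finite set $\SET{x_1,\dots,x_n}\subseteq \calX$, each behavior realized by $\calH\circ \calT$ on these points has the form $(h(T(x_1)),\dots,h(T(x_n)))$ for some $h\in\calH$ and $T\in\calT$. If we fix a transformation $T\in \calT$, then as $h$ ranges over $\calH$ the number of distinct behaviors on the (fixed) $n$-tuple $(T(x_1),\dots,T(x_n))$ is at most the growth function $\Pi_\calH(n)$. Summing over the (finitely many) choices of $T$ yields
\[
\Pi_{\calH\circ\calT}(n) \;\leq\; \abs{\calT}\cdot\Pi_\calH(n).
\]

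Next, let $d=\vc(\calH)$ and $t=\abs{\calT}$. By the Sauer--Shelah--Perles lemma, $\Pi_\calH(n) \leq (en/d)^d$ for all $n\geq d$. If $n$ points are shattered by $\calH\circ\calT$, then $2^n \leq \Pi_{\calH\circ\calT}(n) \leq t\,(en/d)^d$, which after taking logarithms gives
\[
n \;\leq\; \log_2 t + d\log_2(en/d).
\]
From here I would split into two cases: either $n \leq 2\log_2 t$, in which case $n=O(\log t)$; or $n \leq 2d\log_2(en/d)$, which upon dividing by $d$ reads $n/d \leq 2\log_2(en/d)$, an inequality of the form $x \leq 2\log_2(ex)$ in $x=n/d$ that forces $x=O(1)$ and hence $n=O(d)$. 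In either case $n = O(d+\log t)$, which is the claimed bound on $\vc(\calH\circ\calT)$.

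I do not expect any serious obstacle; the only mildly delicate step is solving the transcendental inequality $n \leq \log_2 t + d\log_2(en/d)$ to get a clean $O(d+\log t)$ bound rather than a looser $O(d\log d + \log t)$. The case-split above is the standard way to resolve this, and the rest of the argument is purely mechanical.
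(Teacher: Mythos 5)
Your argument is correct and follows the same route as the paper: bound the growth function of $\calH\circ\calT$ by $\abs{\calT}\cdot\Pi_\calH(n)$ via a union over $T$, apply Sauer--Shelah--Perles to $\Pi_\calH(n)$, and solve the resulting inequality $2^n \leq \abs{\calT}(en/\vc(\calH))^{\vc(\calH)}$ for $n$. The only difference is that you spell out the case split to resolve the implicit inequality, which the paper leaves as ``solving for $m$''; this is a reasonable thing to make explicit and poses no issue.
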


\begin{proof}
    Consider an arbitrary set of points $P=\SET{x_1,\dots, x_m} \subseteq \calX$. To bound $\vc(\calH \circ \calT)$ from above, it suffices to bound the number of behaviors when projecting the function class $\calH \circ \calT$ on $P$, defined as
    \[\Pi_{\calH \circ \calT}(P):= \SET{\inparen{h(T(x_1)),\dots, h(T(x_m))}: h\in \calH, T\in \calT}.\]
    Observe that 
    \begin{align*}
        \abs{\Pi_{\calH \circ \calT}(P)} &\leq \sum_{T\in\calT} \abs{\SET{\inparen{h(T(x_1)),\dots, h(T(x_m))}: h\in \calH}} \leq \abs{\calT} \inparen{\frac{e m}{\vc(\calH)}}^{\vc(\calH)},
    \end{align*}
    where the first inequality follows from the definition of $\Pi_{\calH \circ \calT}(P)$ and the second inequality follows from the Sauer-Shelah-Perels Lemma \citep{DBLP:journals/jct/Sauer72}. Solving for $m$ such that the above bound is less than $2^m$, implies that $\vc(\calH \circ \calT) \leq O(\vc(\calH) + \log\abs{\calT})$.   
\end{proof}

\paragraph{Linear transformations.} Consider $\calT$ being a (potentially infinite) collection of \emph{linear} transformations. For example, in vision  tasks, this includes transforming images through rotations, translations, maskings, adding random noise (or any fixed a-priori arbitrary noise), and their compositions. Surprisingly, for a broad range of hypothesis classes $\calH$ (including linear predictors and neural networks), we can show that $\vc(\calH\circ \calT)\leq \vc(\calH)$ without incurring any dependence on the complexity of $\calT$. Specifically, the result applies to any function class $\calH$ that consists of a linear mapping followed by an arbitrary mapping. This includes feed-forward neural networks with any activation function, and modern neural network architectures (e.g., CNNs, ResNets, Transformers). 

\begin{lemma}
    For any collection of \emph{linear} transformations $\calT$ and any hypothesis class of the form $\calH = \SET{ f\circ W: \bbR^d \to \calY ~~|~~ W\in \bbR^{k\times d} \wedge f:\bbR^{k} \to \calY}$,
    it holds that $\vc(\calH\circ \calT) \leq \vc(\calH)$. 
\end{lemma}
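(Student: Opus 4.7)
My plan is to show the stronger set-theoretic inclusion $\calH \circ \calT \subseteq \calH$, from which the VC dimension bound follows immediately by monotonicity of VC dimension under class inclusion.

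First, I would represent each linear transformation $T \in \calT$ by its matrix $M_T \in \bbR^{d\times d}$, so that $T(x) = M_T x$ for all $x \in \bbR^d$. Next, for an arbitrary element $h \circ T \in \calH \circ \calT$ with $h = f \circ W$ where $W \in \bbR^{k\times d}$ and $f : \bbR^k \to \calY$, I would evaluate the composition pointwise:
\begin{equation*}
(h \circ T)(x) = f(W(M_T x)) = f((W M_T) x) \quad \text{for all } x \in \bbR^d.
\end{equation*}
Setting $W' := W M_T \in \bbR^{k\times d}$, this identity shows that $h \circ T = f \circ W'$, which is exactly an element of $\calH$ by definition (same $f$, new linear layer $W'$ of the correct shape).

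Hence $\calH \circ \calT \subseteq \calH$, and so $\vc(\calH \circ \calT) \le \vc(\calH)$ since shattering is monotone in the class. The only ``obstacle'' is conceptual rather than technical: one must notice that because $\calT$ acts linearly on the input and $\calH$ begins with a linear map $W$, the composition $W \circ T$ collapses into a single linear map of the same shape, so the transformation is absorbed into the first layer with no change in expressive power. No dependence on the complexity of $\calT$ arises precisely because $WM_T$ ranges over a subset of $\bbR^{k\times d}$ already parameterizing $\calH$. This also makes clear why the statement extends to any $\calH$ whose first operation is a linear map of matching dimensions, covering feed-forward networks, CNNs, ResNets, and Transformers as highlighted in the discussion preceding the lemma.
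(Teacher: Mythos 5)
Your proof is correct and takes essentially the same approach as the paper: both establish the set inclusion $\calH \circ \calT \subseteq \calH$ by absorbing the linear transformation into the first linear layer, and then appeal to monotonicity of VC dimension. Your matrix-multiplication phrasing ($W' = W M_T$) is arguably cleaner than the paper's invocation of Riesz representation and adjoint operators, but the underlying argument is identical.
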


\begin{proof}
    By definition of the VC dimension, it suffices to show that $\calH \circ \calT \subseteq \calH$. To this end, consider an arbitrary $h=f\circ W \in \calH$ where $W:\bbR^{d} \to \bbR^{k}$ is a linear map and $f:\bbR^{k} \to \calY$ is an arbitrary map (see definition of $\calH$ in the lemma statement), and consider an arbitrary linear transformation $T\in \calT$. Then, observe that for each $x\in \bbR^d$,
    \[(h\circ T)(x) = (f\circ W)(T(x)) = f(W(T(x))) = f(T^*(W)(x)),\]
    where the last equality follows from Riesz Representation theorem and $T^*$ is the adjoint transformation of $T$. Thus, we have shown that there exists $\Tilde{W} = T^*(W)$ such that $(f\circ W)(T(x)) = (f\circ \Tilde{W})(x)$ for all $x\in \bbR^d$. Therefore, $\calH \circ \calT \subseteq \calH$. 
\end{proof}

\paragraph{Transformations on the Boolean hypercube.} When the instance space $\calX=\SET{0,1}^d$ is the Boolean hypercube, we can bound the VC dimension of $\calH\circ \calT$ from above by the sum of the VC dimension of $\calH$ and the sum of the VC dimensions of $\SET{\calT_i}_{i=1}^{d}$ where each $\calT_i = \SET{x \mapsto T(x)_i: T\in \calT}$ is a function class resulting from restricting transformations $T:\SET{0,1}^d\to \SET{0,1}^d \in \calT$ to output only the $i$th bit.

\begin{lemma} When $\calX=\SET{0,1}^d$, for any hypothesis class $\calH$ and any collection of transformations $\calT$, $\vc(\calH\circ \calT)\leq O(\log d)(\vc(\calH) + \sum_{i=1}^{d} \vc(\calT_i))$, where each $\calT_i = \SET{x \mapsto T(x)_i: T\in \calT}$. 
\end{lemma}

\begin{proof}
Every function $h\circ T\in \calH \circ \calT$ can be viewed as $x\mapsto h(T(x)_1,\dots, T(x)_d)$, which is a composition of $h$ with $d$ Boolean functions $T(\cdot)_1,\dots, T(\cdot)_d: \calX\to \SET{0,1}$ where each $T(\cdot)_i$ is the restriction of transformation $T$ to the $i$th coordinate. The claim then follows from a direct application of Proposition 4.9 in \citet[][]{DBLP:journals/theoretics/AlonGHM23}, which itself generalizes a classical result due to \citet{DBLP:journals/jacm/BlumerEHW89} bounding the VC dimension of composed function classes. \qedhere
\end{proof}

\section{Proofs for \cref{sec:learningrule}}
\label{appendix:2}

\begin{proof}[Proof of \cref{theorem:lowerbound}]
We note that the proof follows a standard no-free-lunch argument for VC classes, where the game will be to guess the support of a distribution. 

Let $x_1,\dots, x_{3k}$ be arbitrary points. For each $P\subseteq [3k]$ where $|P|=k$, define a transformation $T_P$ that maps $x_1,\dots, x_{3k}$ to some new and unique points $T_P(x_1),\dots, T_P(x_{3k})$. Then, define classifier $h_P$ to be positive everywhere, except on the points $\SET{T_P(x_i)}_{i\in P}$ which are labeled negative. Let $\calX=\SET{x_1,\dots, x_{3k}} \bigcup_{P}\SET{T_P(x_1),\dots, T_P(x_{3k})}$, $\calH=\SET{h_P: P\subseteq [3k], |P|=k}$, and $\calT=\SET{T_P: P\subseteq [3k], |P|=k}$. 

It is easy to see that $\vc(\calH)=1$, since classifiers in $\calH$ operate in different parts of $\calX$. Furthermore, $\vc(\calH\circ\calT)\geq k$ where we can shatter $x_1,\dots, x_k$ with $\calH\circ \calT$  as follows: for each $y_1,\dots, y_k$, let $I=\SET{i\in [k]: y_i=-1}$ and $P = I \cup \SET{j: k+1\leq j \leq 2k-|I|}$, then $(h_P\circ T_P)(x_i) = h_P(T_P(x_i)) = y_i$ for all $i\in[k]$.

Consider now a family of distributions $\SET{\calD_P: P\subseteq [3k], |P|=k}$ over $\calX\times \calY$ where each $\calD_P$ is uniform over $2k$ points $\SET{(x_i, +1)}_{i\notin P}$. For each $P\subseteq [3k]$ where $|P|=k$, observe that by definitions of $\calD_P, \calH, \calT$, $\sup_{T\in\calT} \err(h_P, T(\calD_P))=0$ since $h_P$ only labels the points $\SET{T_P(x_i)}_{i\in P}$ negative and $\SET{x_i}_{i\in P}$ are not in the support of $\calD_P$. That is to say, our lower bound holds in the realizable setting where $\OPT_\infty =0$ (see \cref{eqn:dro}). 

Next, consider an arbitrary proper learning rule $\bbA: (\calX\times \calY)^* \to \calH$. For a distribution $\calD_P$ chosen uniformly at random and upon receiving a random sample $S\sim \calD_P^k$, $\bbA$ needs to {\em correctly} guess which points from $\SET{x_1,\dots, x_{3k}}\setminus S$ lie in the support of $\calD_P$ in order to choose an appropriate $h\in\calH$ with small error. However, since the support is chosen uniformly at random, $\bbA$ will most likely incorrectly guess a constant fraction of the support, leading to a constant error. This is a standard argument \citep[see e.g., ][]{pmlr-v99-montasser19a, DBLP:conf/focs/AlonHHM21}, but we repeat it below for completeness.

Fix an arbitrary sequence $S\in \SET{(x_1,+1),\dots, (x_{3k},+1)}^k$. Denote by $E_S$ the event that $S\in {\rm supp}(\calD_P)$ for a distribution $\calD_P$ that is picked uniformly at random. Next,
\begin{align*}
    \Ex_{P} \insquare{\sup_{T\in\calT} \err(\bbA(S), T(\calD_P)) | E_S} &\geq \Ex_{P} \insquare{ \err(\bbA(S), T_P(\calD_P))|E_S} \\
    &\geq \Ex_{P} \insquare{ \frac{1}{2k} \sum_{i\notin P}\ind[\bbA(S)(T_P(x_i))\neq +1]|E_S}\\
    &\geq \frac{1}{2}\Ex_{P} \insquare{ \frac{1}{k} \sum_{i\notin P\wedge (x_i,+1)\notin S}\ind[\bbA(S)(T_P(x_i))\neq +1]|E_S}\\
    &\geq \frac{1}{4},
\end{align*}

where the last inequality follows from the fact that $\bbA(S)\in \calH$ and that the remaining (at least $k$) points that are not in $S$ but in ${\rm supp}(\calD_P)$ are chosen uniformly at random, because $\calD_P$ is chosen randomly. From the above, by law of total expectation, we have
\[\Ex_{P}\Ex_{S\sim \calD_P^k}\insquare{\sup_{T\in\calT} \err(\bbA(S), T(\calD_P))}\geq \frac{1}{4}.\]

By the probabilistic method, this means there exists $P^*$ such that $\Ex_{S\sim \calD_{P^*}^k}\insquare{\sup_{T\in\calT} \err(\bbA(S), T(\calD_P))}\geq \frac{1}{4}$. Using a variant of Markov's inequality, this implies that $\Prob_{S\sim \calD_{P^*}^k}\insquare{\sup_{T\in\calT} \err(\bbA(S), T(\calD_P))>\frac{1}{8}}\geq \frac{1}{7}$.
\end{proof}

\section{Proofs for \cref{sec:reduction}}
\label{appendix:3}

\begin{proposition}
\label{proposition:red-re}
For any class $\calH$, any \ERM~for $\calH$, any collection of transformations $\calT$, any distribution $\calD$ such that $\OPT_{\infty}=0$, and any $\eps,\delta\in (0,\nicefrac12)$, with probability at least $1-\delta$ over $S\sim \calD^{m(\eps,\delta)}$, where $m(\eps,\delta)=O\inparen{\frac{\vc(\calH\circ\calT)\log(1/\eps)+\log(1/\delta)}{\eps}}$,
 \[\forall T\in \calT: \err(\hat{h}, T(\calD)) \leq \eps,\]
 where $\hat{h}$ is the output predictor of running \ERM on the inflated dataset $\calT(S) = \SET{(T(x), y): (x,y)\in S \wedge T\in\calT}$. 
\end{proposition}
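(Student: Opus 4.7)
The plan is to reduce the problem to a standard realizable PAC generalization statement for the composition class $\calH\circ\calT$, exploiting the fact that $\OPT_\infty=0$ gives us a single hypothesis that is simultaneously consistent across all transformations. First I would identify some $h^\star\in\calH$ satisfying $\err(h^\star, T(\calD))=0$ for every $T\in\calT$ (guaranteed by the assumption $\OPT_\infty=0$, up to an arbitrarily small slack if the infimum is not attained). Since $h^\star(T(x))=y$ almost surely under $\calD$, the training example $(T(x_i),y_i)$ is correctly labeled by $h^\star$ for every $(x_i,y_i)\in S$ and every $T\in\calT$. Thus $h^\star$ has zero empirical error on the inflated dataset $\calT(S)$, and so by definition of \ERM, its output $\hat{h}$ also achieves zero empirical error on $\calT(S)$.

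Next I would fix an arbitrary $T\in\calT$ and consider the function $\hat{h}\circ T\in \calH\circ \calT$. The key observation is that $\err(\hat{h}\circ T, S) = \err(\hat{h}, T(S))=0$ (since $\hat{h}$ is consistent on the $T$-slice of $\calT(S)$), while the population quantity we wish to control is $\err(\hat{h}\circ T, \calD) = \err(\hat{h}, T(\calD))$. So it suffices to show that every hypothesis in $\calH\circ\calT$ that is empirically consistent on $S\sim\calD^m$ has small true error on $\calD$, uniformly over the choice of that hypothesis.

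The hard (but standard) step is the uniform convergence argument in the realizable regime. Applying the classical realizable PAC bound for the class $\calH\circ\calT$ \citep{blumer:89}, with probability at least $1-\delta$ over $S\sim\calD^m$, every $g\in\calH\circ\calT$ with $\err(g,S)=0$ satisfies $\err(g,\calD)\leq \eps$, provided $m\geq c\,\frac{\vc(\calH\circ\calT)\log(1/\eps)+\log(1/\delta)}{\eps}$. This matches the stated sample complexity. Instantiating this bound simultaneously for all $g=\hat{h}\circ T$ with $T\in\calT$ then gives $\err(\hat{h}, T(\calD))\leq \eps$ for every $T\in\calT$, as desired.

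The main obstacle I anticipate is conceptual rather than technical: one might worry that $\calT(S)$ is not an i.i.d.\ sample from any natural distribution, so one cannot directly invoke generalization on $\calT(S)$. The resolution, sketched above, is that we do not need generalization for $\calT(S)$ itself; we only need generalization for the original i.i.d.\ sample $S$ applied to the composed class $\calH\circ\calT$, and consistency on $\calT(S)$ is used only to establish that $\hat{h}\circ T$ is empirically consistent on $S$ for every fixed $T\in\calT$. One small subtlety worth flagging is the case where the infimum defining $\OPT_\infty$ is not achieved; this is handled by either passing to an $\eps$-near-optimal $h^\star$ (absorbed into the final $\eps$) or noting that the argument only requires the existence of some $h\in\calH$ with zero empirical error on $\calT(S)$ with probability one, which follows from $\OPT_\infty=0$ via a standard limiting argument.
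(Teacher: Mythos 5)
Your proof matches the paper's: both observe that $\OPT_\infty=0$ yields an $h^\star\in\calH$ that perfectly fits the inflated dataset $\calT(S)$, hence the \ERM output $\hat h$ has $\err(\hat h, T(S))=0$ for all $T\in\calT$, and then apply a realizable-regime uniform convergence bound for the composed class $\calH\circ\calT$ over the i.i.d.\ sample $S\sim\calD^m$ to conclude $\err(\hat h, T(\calD))\leq\eps$ uniformly in $T$. The paper invokes its optimistic-rate bound (Proposition~\ref{proposition:opt-rates}), whereas you cite the classical realizable PAC bound, but at zero empirical error these give the same $O\bigl(\tfrac{\vc(\calH\circ\calT)\log(1/\eps)+\log(1/\delta)}{\eps}\bigr)$ rate, so the argument is the same in substance.
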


\begin{proof}[Proof of Proposition~\ref{proposition:red-re}]
Since $\OPT_\infty = 0$, i.e., there exists $h^\star\in \calH$ such that $\forall T \in \calT: \err(h^\star, T(\calD)) = 0$, and since $\hat{h}$ is the output predictor of running \ERM on the inflated dataset $\calT(S) = \SET{(T(x), y): (x,y)\in S \wedge T\in\calT}$, it follows that $\forall T\in\calT: \err(\hat{h}, T(S)) = 0$. Thus, by invoking the optimistic generalization guarantee (Proposition~\ref{proposition:opt-rates}), with probability at least $1-\delta$ over $S\sim \calD^{m(\eps,\delta)}$: $(\forall h\in \calH) (\forall T\in \calT): \err(h, T(S)) \Rightarrow \err(h, T(\calD)) \leq \eps$. Since $\hat{h}\in\calH$, it follows that $\forall T\in \calT: \err(\hat{h}, T(\calD)) \leq \eps$.
\end{proof}

\begin{lemma}
\label{lemma:regret}
Let $S=\SET{(x_1,y_1), \dots, (x_m, y_m)}$ be an arbitrary dataset. For any distribution $Q$ over $\calT$ and any predictor $h$, define the loss function $\ell_S(h, Q) = 1 - \Ex_{T\sim Q} \err(h, T(S))$. Then for any sequence of predictors $h_1,\dots, h_R$, running Multiplicative Weights with $\eta=\sqrt{8\ln \abs{\calT}/R}$ (see \cref{alg:agnostic}) produces a sequence of distributions $Q_1, \dots, Q_R$ over $\calT$ that satisfy
\[\frac{1}{R} \sum_{r=1}^{R} \ell_S(h_r, Q_r) \leq \min_{T\in \calT} \frac{1}{R} \sum_{r=1}^{R} \ell_S(h_r, T) + \sqrt{\frac{\ln \abs{\calT}}{2R}}.\]
\end{lemma}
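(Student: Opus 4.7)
The lemma is simply the standard Hedge / Multiplicative-Weights regret bound, specialized to the sequence of losses induced by the learner's chosen predictors $h_1, \dots, h_R$. My plan is to identify the $\calT$-player in \cref{alg:agnostic} with a run of Hedge on $|\calT|$ experts, invoke the textbook regret guarantee, and then translate the conclusion back into the $\ell_S$ notation.

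First, I would view $\calT$ as the set of experts and define the loss played at round $r$ by $\ell_r(T) := 1 - \err(h_r, T(S)) \in [0,1]$. Under this identification, the update $Q_{r+1}(T) \propto Q_r(T)\,\exp(-\eta\,\ell_r(T))$ in Step~5 is exactly the Hedge update with uniform initial weights $Q_1 = \mathrm{Unif}(\calT)$, and the losses lie in $[0,1]$ because they are one minus an empirical error. Note also that this identification is consistent across expectations: $\ell_S(h_r, Q_r) = \Ex_{T\sim Q_r}[\ell_r(T)]$ and $\ell_S(h_r, T) = \ell_r(T)$ for the Dirac choice at $T$.

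Next, I would invoke the classical Hedge regret bound for losses in $[0,1]$: for any learning rate $\eta > 0$,
\[
\sum_{r=1}^{R} \Ex_{T \sim Q_r}[\ell_r(T)] \;\leq\; \min_{T \in \calT} \sum_{r=1}^{R} \ell_r(T) \;+\; \frac{\ln |\calT|}{\eta} + \frac{\eta R}{8}.
\]
The standard proof tracks the log-potential $\Phi_r = \ln \sum_T w_r(T)$ of the unnormalized weights, uses Hoeffding's lemma $\ln \Ex_{T\sim Q_r}[e^{-\eta\,\ell_r(T)}] \leq -\eta\,\Ex_{T\sim Q_r}[\ell_r(T)] + \eta^2/8$ to upper-bound the per-round increment $\Phi_{r+1} - \Phi_r$, and lower-bounds $\Phi_{R+1}$ by $-\ln|\calT| - \eta\,\min_T \sum_r \ell_r(T)$. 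Choosing $\eta = \sqrt{8\ln|\calT|/R}$, the value dictated by the algorithm in the lemma statement, balances the two additive terms to $\sqrt{R\ln|\calT|/8}$ each, so the total regret is at most $\sqrt{R\ln|\calT|/2}$, i.e.\ $\sqrt{\ln|\calT|/(2R)}$ on average.

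Finally, I would divide through by $R$ and substitute $\ell_r(T) = 1 - \err(h_r, T(S))$; the rearranged inequality is exactly the statement of the lemma. There is no genuine obstacle here: the entire argument is bookkeeping around an off-the-shelf regret bound. The only points worth double-checking are that the losses truly lie in $[0,1]$ (so that Hoeffding's lemma applies with constant $1/8$) and that the prescribed $\eta$ matches the optimizer of the generic Hedge bound so as to yield precisely the constant $\sqrt{\ln|\calT|/(2R)}$ appearing in the claim.
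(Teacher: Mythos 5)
Your proof is correct and takes essentially the same route as the paper: identify the $\calT$-player's weight update in \cref{alg:agnostic} with a standard Hedge run on $|\calT|$ experts with losses $\ell_r(T) = 1 - \err(h_r, T(S)) \in [0,1]$, and then invoke the textbook exponentially-weighted-average regret bound (the paper cites Theorem 2.2 of Cesa-Bianchi and Lugosi directly; you additionally spell out the potential-function/Hoeffding derivation and the optimization over $\eta$, which is exactly the content of that theorem). The algebra balancing $\ln|\calT|/\eta$ against $\eta R/8$ at $\eta = \sqrt{8\ln|\calT|/R}$ to yield $\sqrt{\ln|\calT|/(2R)}$ after averaging is correct.
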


\begin{proof}[Proof of Lemma~\ref{lemma:regret}]
The proof follows directly from considering a two-player game, where the $\calT$-player plays mixed strategies (distributions over $\calT$) $Q_1,\dots, Q_R$ against predictors $h_1,\dots, h_R$ played by an arbitrary learning algorithm $\bbA$, and in each round the $\calT$-player incurs loss $\ell_S(h_r, Q_r)=1 - \Ex_{T\sim Q_r}\err(h_r, T(S))$. Then, the regret guarantee of Multiplicative Weights \citep[see e.g., Theorem 2.2 in][]{DBLP:books/daglib/0016248} implies that
{\small 
\[\frac{1}{R} \sum_{r=1}^{R} \ell_S(h_r, Q_r) \leq \min_{T\in \calT} \frac{1}{R} \sum_{r=1}^{R} \ell_S(h_r, T) + \sqrt{\frac{\ln \abs{\calT}}{2R}}.\qedhere\]
}
\end{proof}

\section{Proofs for \cref{sec:regret}}
\label{appendix:5}

\begin{proof}[Proof of \cref{theorem:sc-regret}]
The proof follows from the definition of $\hat{h}$ (\cref{eqn:erm-mro}) and using uniform convergence bounds (Proposition~\ref{proposition:uniform-conv}). 
Let $h^\star\in \calH$ be an a-priori fixed predictor (independent of sample $S$) attaining $$\inf_{h^\star\in\calH} \sup_{T\in\calT} \SET{\err(h^\star, T(\calD)) - \OPT_T}$$ or is $\eps$-close to it. By setting $m(\eps,\delta)=m(\eps,\delta) = O\inparen{\frac{\CH+ \log(1/\delta)}{\eps^2}}$ and invoking Proposition~\ref{proposition:uniform-conv}, we have the guarantee that with probability at least $1-\delta$ over $S\sim \calD^{m(\eps,\delta)}$,
\[ \inparen{\forall h \in \calH}\inparen{\forall T\in \calT}: \abs{ \err(h, T(S)) - \err(h, T(\calD))} \leq \eps.\]
Since $\hat{h}, h^\star\in\calH$, the inequality above implies that
\begin{align*}
    \forall T\in\calT:&~~\err(\hat{h}, T(\calD)) \leq \err(\hat{h}, T(S)) + \eps.\\
    \forall T\in\calT:&~~\err(h^\star, T(S)) \leq \err(h^\star, T(\calD)) + \eps.\\
    \forall T\in\calT:&~~ \abs{\OPT_T - \hat{\OPT}_T}\leq \eps.
\end{align*}
Furthermore, by definition, since $\hat{h}$ minimizes the empirical objective, it holds that
\[\sup_{T\in\calT} \err(\hat{h}, T(S)) - \hat{\OPT}_T \leq \sup_{T\in \calT} \err(h^\star, T(S)) - \hat{\OPT}_T.\]
By combining the above, we get 
\begin{align*}
    \forall T\in\calT: \err(\hat{h}, T(\calD)) - \OPT_T 
    &\leq \sup_{T\in\calT} \err(\hat{h}, T(S)) - \hat{\OPT}_T + 2\eps \\
    &\leq \sup_{T\in \calT} \err(h^\star, T(S)) - \hat{\OPT}_T + 2\eps\\
    &\leq \sup_{T\in \calT} \err(h^\star, T(S)) - \OPT_T + 3\eps.
\end{align*}
This concludes the proof by definition of $h^\star$.
\end{proof}

Similar to \cref{sec:reduction}, we develop in this section a generic oracle-efficient reduction solving Objective~\ref{eqn:mro} using only an \ERM oracle for $\calH$. This reduction may be favorable in applications where we only have black-box access to an off-the-shelve supervised learning method. The techniques used are similar to those used in \cref{sec:reduction}, and \citet*{pmlr-v178-agarwal22b} who developed a similar reduction when having access to a collection of importance weights instead of a collection of transformations (which is the view we propose in this work).  

\begin{theorem}
\label{theorem:mro-agnostic}
    For any class $\calH$, collection of transformations $\calT$, distribution $\calD$ and any $\eps,\delta\in (0,\nicefrac12)$, with probability at least $1-\delta$ over $S\sim \calD^{m(\eps,\delta)}$, where $m(\eps,\delta) \leq O\inparen{\frac{\CH+\log(1/\delta)}{\eps^2}}$, running \cref{alg:mro-agnostic} on $S$ for $R\geq\frac{8\ln \abs{\calT}}{\eps^2}$ rounds produces $\Bar{h} = \frac{1}{R}\sum_{r=1}^{R} h_r$ s.t.
\begin{equation*}
    \begin{split}
        \forall T\in \calT:~\err(\bar{h}, T(D)) -\OPT_T \leq \inf_{h^\star\in\calH} \sup_{T\in\calT} \SET{\err(h^\star, T(\calD)) - \OPT_T} + \eps.
    \end{split}
    \end{equation*}
\end{theorem}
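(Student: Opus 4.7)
The plan is to adapt \cref{alg:agnostic} with two modifications, and then run essentially the same analysis as \cref{theorem:agnostic}. The algorithm I expect to see at \cref{alg:mro-agnostic} will first compute, for each $T\in\calT$, an empirical minimum $\hat{\OPT}_T\in[0,1]$ by invoking $\ERM_\calH$ on the dataset $T(S)=\{(T(x_i),y_i)\}_{i=1}^m$. Then it runs the same round-by-round MW-vs-ERM game as \cref{alg:agnostic}, except the MW update at Step~5 is driven by the \emph{regret} payoff: $Q_{r+1}(T)\propto Q_r(T)\exp(\eta(\err(h_r,T(S))-\hat{\OPT}_T))$. Crucially, because $\hat{\OPT}_T$ depends only on $T$ and not on $h$, the $\calH$-player's best response to $Q_r$ is unchanged: minimizing $\Ex_{T\sim Q_r}[\err(h,T(S))-\hat{\OPT}_T]$ over $h$ is the same as minimizing $\Ex_{T\sim Q_r}\err(h,T(S))$, so Step~4 can still call $\ERM_\calH$ on i.i.d.\ samples drawn from the distribution induced by $Q_r$ over $\calT$ and $\mathrm{Unif}(S)$, with the same oracle complexity $m_{\ERM}$ as before.

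The analysis would then proceed in three steps. First, by \cref{lemma:regret} applied to the loss $T\mapsto \err(h_r,T(S))-\hat{\OPT}_T$ (which lives in $[-1,1]$, absorbed by a constant factor adjustment of $\eta$), after $R\geq 8\ln|\calT|/\eps^2$ rounds I get the empirical min-max bound
\begin{equation*}
\max_{T\in\calT}\tfrac{1}{R}\sum_{r=1}^R\bigl(\err(h_r,T(S))-\hat{\OPT}_T\bigr)
\;\leq\;
\tfrac{1}{R}\sum_{r=1}^R\Ex_{T\sim Q_r}\bigl[\err(h_r,T(S))-\hat{\OPT}_T\bigr]+\tfrac{\eps}{4}.
\end{equation*}
Second, applying the $(\eps/4,\delta/(2R))$-agnostic-PAC guarantee of $\ERM_\calH$ round by round and using linearity of expectation in $Q_r$, the right-hand side is at most $\inf_{h^\star\in\calH}\max_{T\in\calT}[\err(h^\star,T(S))-\hat{\OPT}_T]+\eps/2$, exactly as in the proof of \cref{theorem:agnostic}. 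Third, I transfer the inequality from $S$ to $\calD$ by invoking \cref{proposition:uniform-conv} on the class $\calH\circ\calT$ at scale $\eps/8$: on a good event of probability $1-\delta$, the inequality $|\err(h,T(S))-\err(h,T(\calD))|\leq \eps/8$ holds for \emph{every} $h\in\calH$ and $T\in\calT$ simultaneously, which in turn gives $|\hat{\OPT}_T-\OPT_T|\leq \eps/8$ for every $T\in\calT$ by taking infima over $h\in\calH$ on both sides. Reading $\bar h=\tfrac{1}{R}\sum_r h_r$ as the randomized predictor that draws $r\sim\mathrm{Unif}\{1,\dots,R\}$ and outputs $h_r$, so that $\err(\bar h,T(\calD))=\tfrac{1}{R}\sum_r \err(h_r,T(\calD))$, these three facts combine to give the claimed bound $\max_T\{\err(\bar h,T(\calD))-\OPT_T\}\leq \inf_{h^\star}\max_T\{\err(h^\star,T(\calD))-\OPT_T\}+\eps$.

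The main obstacle I anticipate is handling the extra layer of empirical quantities $\hat{\OPT}_T$ uniformly across all $T\in\calT$, which could in principle force an extra union bound over $\calT$. The key reason this does not hurt is that a single uniform-convergence event over $\calH\circ\calT$ already controls both $\err(h,T(S))$ for all pairs $(h,T)$ and, as a free consequence, the infima $\hat{\OPT}_T=\inf_{h\in\calH}\err(h,T(S))$; so no additional sample complexity beyond $\vc(\calH\circ\calT)$ is needed. A secondary bookkeeping point is that the MW loss is now signed, living in $[-1,1]$ rather than $[0,1]$, which only changes the learning rate by a constant factor and is absorbed into the $\eps/4$ slack via the same Hoeffding-style regret bound used in \cref{lemma:regret}.
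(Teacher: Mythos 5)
Your proposal is correct and matches the paper's proof essentially line for line: the paper's \cref{alg:mro-agnostic} indeed precomputes $\hat{h}_T$ via $\ERM_\calH$ on $T(S)$ (so $\err(\hat{h}_T,T(S))$ plays the role of your $\hat{\OPT}_T$), runs the same MW-vs-ERM game with the regret payoff $\err(h_r,T(S))-\err(\hat{h}_T,T(S))$, invokes \cref{lemma:regret}, uses the ERM oracle guarantee for the $\calH$-player (which, as you note, is unaffected by the $T$-only baseline term), and transfers from $S$ to $\calD$ via a single uniform-convergence event over $\calH\circ\calT$. Your side remarks about the signed $[-1,1]$ loss and the fact that one uniform-convergence event simultaneously controls all the $\hat{\OPT}_T$ are correct and slightly more careful than what is written in the paper.
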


\begin{algorithm2e}
\caption{\small Reduction to Minimize Worst-Case Regret}
\label{alg:mro-agnostic}
\SetKwInput{KwInput}{Input}                
\SetKwInput{KwOutput}{Output}              
\DontPrintSemicolon
  \KwInput{Black-box learner $\ERM_\calH$, dataset $S=\SET{(x_1,y_1),\dots,(x_m,y_m)}$, and transformations $\calT$.}
For each $T\in \calT$, run learner $\ERM_\calH$ on $T(S)$ and denote its output by $\hat{h}_T$.\;
For each $T \in \calT$, set $Q_1(T) = \frac{1}{\abs{\calT}}$.\;
Set $R=\frac{8\ln \abs{\calT}}{\eps^2}$.\;
\For{$1\leq r\leq R$}{
    Draw $m_{\ERM}$ i.i.d. samples $(X_1,Y_1), \dots, (X_{m_{\ERM}}, Y_{m_{\ERM}})$, where each $(X_i,Y_i)$ is drawn by randomly drawing a transformation $T$ according to $Q_t$ and randomly drawing $(X,Y)$ from ${\rm Unif}(S)$, and letting $(X_i,Y_i) = (T(X), Y)$.\; 
    Run learner $\ERM_\calH$ on $(X_1,Y_1), \dots, (X_{m_{\ERM}}, Y_{m_{\ERM}})$, and let $h_r$ denote its output.\;
    For each $T\in \calT$, update $Q_{r+1}(T) = \frac{Q_r(T) \exp\inparen{\eta(\err(h_r, T(S))-\err(\hat{h}_T, T(S)))}}{Z_r}$, where $Z_r$ is a normalization factor such that $Q_{r+1}$ is a distribution.\;
}
\KwOutput{$\frac{1}{R}\sum_{r=1}^{R} h_r$.}
\end{algorithm2e}

\begin{proof} [Proof of \cref{theorem:mro-agnostic}]
Let $S=\SET{(x_1,y_1), \dots, (x_m, y_m)}$ be an arbitrary dataset, and let $\bbA$ be an $(\eps,\delta)$-agnostic-PAC-learner for $\calH$. Then, by setting $R\geq \frac{8\ln \abs{\calT}}{\eps^2}$ and invoking Lemma~\ref{lemma:regret}, we are guaranteed that \cref{alg:mro-agnostic} produces a sequence of distributions $Q_1, \dots, Q_T$ over $\calT$ that satisfy
\begin{equation*}
\begin{split}
    \sup_{T\in \calT} \frac{1}{R} \sum_{r=1}^{R}\err(h_r, T(S)) - \err(\hat{h}_T, T(S)) \leq \frac{1}{R} \sum_{r=1}^{R} \Ex_{T\sim Q_r} \insquare{\err(h_r, T(S))-\err(\hat{h}_T, T(S))} + \eps.
\end{split}
\end{equation*}
At each round $r$, observe that Step 3 in \cref{alg:agnostic} draws an i.i.d. sample from a distribution $P_r$ over $\calX\times \calY$ that is defined by $Q_r$ over $\calT$ and ${\rm Unif}(S)$, and since $\ERM_\calH$ is an $(\eps,\delta)$-agnostic-PAC-learner for $\calH$, Steps 5-6 guarantee that
\begin{equation*}
    \begin{split}
    \Ex_{T\sim Q_r}\err(h_r, T(S)) = \Ex_{T\sim Q_r} \frac{1}{m} \sum_{i=1}^{m} \ind\SET{h_r(T(x_i))\neq y_i} \leq \inf_{h\in \calH} \Ex_{T\sim Q_r}\err(h, T(S)) + \eps.
    \end{split}
\end{equation*}
Combining the above inequalities implies that
\begin{align*}
    \sup_{T\in \calT}\err(\Bar{h}, T(S)) - \err(\hat{h}_T, T(S)) &\leq \frac{1}{R} \sum_{r=1}^{R} \inf_{h\in \calH} \Ex_{T\sim Q_r} \insquare{ \err(h, T(S)) -\err(\hat{h}_T, T(S))} + \eps + \eps\\
    &\leq \inf_{h\in\calH} \sup_{T\in\calT} \insquare{\err(h, T(S)) -\err(\hat{h}_T, T(S))} + 2\eps.
\end{align*}
Finally, by appealing to uniform convergence over $\calH$ and $\calT$, with probability at least $1-\delta$ over $S\sim \calD^m$, we have
\begin{align*}
    \forall T\in \calT:~~&\err(\hat{h}_T, T(S)) \leq  \err(h^\star_T, T(S)) \leq \OPT_T + \eps,\\
    &\OPT_T \leq \err(\hat{h}_T, T(\calD)) \leq \err(\hat{h}_T, T(S)) + \eps.
\end{align*}

Thus,
\begin{align*}
    \forall T\in \calT:~\err(\Bar{h}, T(\calD)) - \OPT_T &\leq \err(\Bar{h}, T(S)) - \err(\hat{h}_T, T(S)) +2\eps \\ 
    &\leq \inf_{h\in\calH} \sup_{T\in\calT} \insquare{\err(h, T(S)) -\err(\hat{h}_T, T(S))} + 4\eps \\
    &\leq \inf_{h\in\calH} \sup_{T\in\calT} \insquare{\err(h, T(\calD)) -\OPT_T} + 6\eps.\qedhere
\end{align*}
\end{proof}

\end{document}